\documentclass[journal]{IEEEtran}

\ifCLASSINFOpdf

\else

\fi

\hyphenation{op-tical net-works semi-conduc-tor}
\usepackage{times}
\usepackage{epsfig}
\usepackage{graphicx}
\usepackage{amsmath}
\usepackage{amssymb}
\usepackage{makecell}
\usepackage{enumitem}
\usepackage{multirow}
\usepackage{xcolor}
\usepackage{graphicx}
\usepackage{pifont}
\usepackage{comment}
\newcommand{\xmark}{\ding{55}}%
\newcommand{\cmark}{\ding{51}}%
\usepackage{footnote}
\usepackage[symbol]{footmisc}

\usepackage{graphicx}
\usepackage{scalerel, amsmath}
\usepackage{amssymb}
\usepackage{booktabs}
\usepackage{verbatim}
\usepackage{pifont}
\usepackage{dsfont}
\usepackage{float}
\usepackage[english]{babel}
\usepackage{amsthm}
\usepackage{xcolor}
\usepackage{multicol}
\usepackage{multirow}
\usepackage{listings}
\usepackage{algorithm}
\usepackage{float}

\usepackage{subcaption}

\usepackage{algpseudocode}

\newtheorem{lemma}{Lemma}

\begin{document}
\title{Fairness in Visual Clustering: A Novel Transformer Clustering Approach}

\author{Xuan-Bac Nguyen,~\IEEEmembership{Student Member,~IEEE,}
        Chi Nhan Duong,~\IEEEmembership{Member,~IEEE,}
        Marios Savvides,~\IEEEmembership{Member,~IEEE,}
        Kaushik Roy,~\IEEEmembership{Member,~IEEE,}
        Hugh Churchill,
        and Khoa Luu$^{**}$,~\IEEEmembership{Member,~IEEE,}%
\thanks{Xuan-Bac Nguyen, and Khoa Luu are with the Department of Computer Science and Computer Engineering, University of Arkansas, Fayetteville, USA}%
\thanks{Hugh Churchill is with the Department of Physics, University of Arkansas, Fayetteville, USA}
\thanks{Chi Nhan Duong is with the Department of Computer Science and Software Engineering, Concordia University, Montreal, Canada}%
\thanks{Marios Savvides is Carnegie Mellon University, USA}
\thanks{Kaushik Roy is with North Carolina A\&T State University, USA}
\thanks{$^{**}$ Corresponding Author: Khoa Luu, email: khoaluu@uark.edu}
}

\markboth{IEEE TRANSACTIONS ON IMAGE PROCESSING}%
{Nguyen \MakeLowercase{\textit{et al.}}: Fairness in Visual Clustering: A Novel Transformer Clustering Approach}

\maketitle

\begin{abstract}
Promoting fairness for deep clustering models in unsupervised clustering settings to reduce demographic bias is a challenging goal. It is because of the limitation of large-scale balanced data with well-annotated labels for sensitive or protected attributes. In this paper, we first evaluate demographic bias in deep clustering models from the perspective of cluster purity, measured by the ratio of positive samples within a cluster to their correlation degree. This measurement is adopted as an indication of demographic bias. Then, a novel loss function is introduced to encourage a purity consistency for all clusters to maintain the fairness aspect of the learned clustering model.
Moreover, we present a novel attention mechanism, Cross-attention, to measure correlations between multiple clusters, strengthening faraway positive samples and improving the purity of clusters during the learning process. 
Experimental results on a large-scale dataset with numerous attribute settings
have demonstrated the effectiveness of the proposed approach on both clustering accuracy and fairness enhancement on several sensitive attributes. 
\end{abstract}

\begin{IEEEkeywords}
Visual Clustering, Fairness, Transformer
\end{IEEEkeywords}

\IEEEpeerreviewmaketitle

\section{Introduction}

Unsupervised learning in automated object and human understanding has recently become one of the most popular research topics. It is because of the nature of the extensive collection of available raw data without labels and the demand for consistent visual recognition algorithms across various challenging conditions. 
Standard visual recognition, e.g., Face Recognition \cite{wang2018cosface} or Visual Landmark Recognition \cite{weyand2020google}, has recently achieved high-performance accuracy in numerous practical applications where probe and gallery visual photos are collected in different environments. 

Together with accuracy, algorithmic fairness has recently received broad attention from research communities as it may enormously impact applications deployed in practice. For example, a face recognition system giving impressive accuracy on white faces while having a very high false positive rate on non-white faces could result in unfair treatment of individuals across different demographic groups. Therefore, by defining a sensitive (protected) attribute (such as gender, ethnicity, or age), a fair and accurate recognition system is particularly desirable.

\begin{figure}[!t]
    \centering  
    \includegraphics[width=0.95\columnwidth]{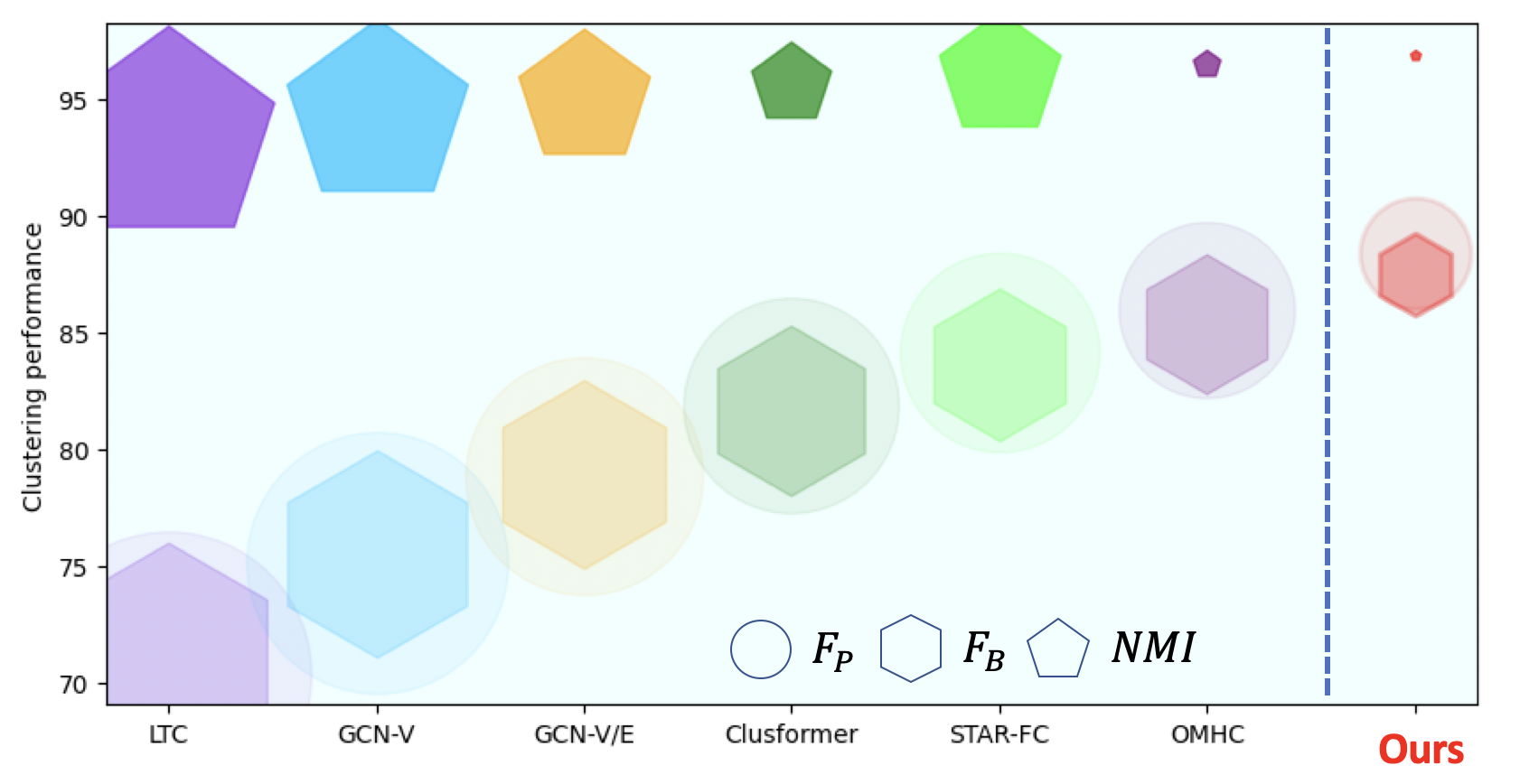}
    \caption{Clustering performance of BUPT-BalancedFace on ethnicity. Each shape area is proportional to \textit{standard deviation} of performance on different groups of the race, i.e., Asian, African, Caucasian, and Indian. The smaller shape, the better the fairness method is. Our method achieves both clustering accuracy and fairness on all metrics ($F_P$, $F_B$, and $NMI$). \textbf{Best viewed in color.}}
    \label{fig:Fig1}
    \centering
\end{figure}

To alleviate the demographic bias for better fairness, many efforts have been directed toward deep learning approaches to extract less biased representations. A straightforward approach is to collect balanced datasets with respect to the sensitive attributes \cite{robinson2020face, wang2019racial, wang2020mitigating, hupont2019demogpairs} %
and use them for the training process. 
However, this approach requires extreme efforts to collect relatively balanced samples and annotations for sensitive attributes. Otherwise, an insufficient number of training samples may also lead to reducing the accuracy of learned models.
Interestingly, approaches relying on balanced training data still suffer from demographic bias to some degree \cite{wang2020mitigating}. %
Another approach is to focus on algorithmic designs such as Fairness-Adversarial Loss \cite{li2020deep}, maximizing the conditional mutual information (CMI) between inputs and cluster assignment \cite{zeng2022deep} given sensitive attributes, Domain Adaptation \cite{mukherjee2022domain, wang2021towards}, 
or deep information maximization adaptation network by transferring recognition knowledge from one demographic group to other groups \cite{wang2020mitigating, wang2019racial}.
Although these introduced methods have shown their advantages in reducing demographic bias without requirements of balanced data, they still lack generalization capability (i.e., learning for a particular sensitive attribute). Moreover, they also rely on accurate annotations for that sensitive attribute during learning.

In this paper, our goal focuses on promoting fairness for deep clustering models in unsupervised clustering settings. Motivated by the fact that clusters of major demographic groups with many samples consist of only a few negative samples and that their correlations are very strong while those of minor demographic groups usually contain a large number of noisy or negative samples with weak correlations among positive ones, the \textbf{\textit{cluster purity}}, i.e., \textit{ratio of the number of positive samples within a cluster to their correlations}, plays a vital role in measuring clustering bias across demographic groups.
Therefore, we first evaluate the bias in deep clustering models from the perspective of cluster purity. Then, by encouraging consistency of the purity aspect for all clusters, demographic bias can be effectively mitigated in the learned clustering model.
In the scope of this work, we further assume that the annotations for sensitive attributes are inaccessible due to their expensive collecting efforts and/ or privacy concerns in annotating them.

\noindent
\textbf{\textit{Contributions.}} In summary, our contributions are four-fold.
(1) The cluster purity and correlation between positive samples are first analyzed and adopted to indicate a demographic bias for a visual clustering approach. 
(2) By promoting the purity similarity across clusters, the visual clustering fairness is effectively achieved without requirements of sensitive attributes' annotations. This property is formed into a novel loss function to improve fairness with respect to various kinds of sensitive attributes.
(3) Regarding deep network design, we present a novel attention mechanism, termed Cross-attention, to measure correlations between multiple clusters and help faraway samples have a stronger relationship with the centroid.
(4) Finally, by comprehensive experiments, the proposed approach consistently achieves State-of-the-Art (SOTA) results compared to recent clustering methods \cite{Liu_2021_ICCV,Shen_2021_CVPR,Nguyen_2021_CVPR,yang2019learning,yang2020learning} on a standard large-scale visual benchmark \cite{Wang_2019_ICCV} across various demographic attributes, i.e., ethnicity, age, gender, race as shown in the Fig \ref{fig:Fig1}.

\section{Related Work}
\noindent
\textbf{Deep Clustering.} 
In most recent studies of deep clustering, clustering functions are often Graph Convolution Neural Networks (GCN) \cite{yang2019learning,yang2020learning} or Transformer-based Networks \cite{Nguyen_2021_CVPR}. Both approaches share the same goal: determine which samples in a cluster are different from the centroid and remove them from the cluster. 

In GCN-based methods, each cluster is treated as a graph where each vertex represents a sample, and an edge connecting two vertices illustrates how strong the connection between them is. The GCN method aims to explore the correlation of a vertex with its neighbors. %
Yang et al. \cite{yang2020learning} presented a method to estimate confidence (GCN-V) of a vertex being a true positive sample or outlier of a given cluster. In addition, GCN-E is also introduced to predict if two vertexes belong to the same class (or cluster). 
However, in minor clusters where purity is low and connections between vertices are weak, the network becomes ineffective \cite{Nguyen_2021_CVPR}. Other GCN-based approaches \cite{Shen_2021_CVPR,Liu_2021_ICCV} also face the same issue.
Transformer has been applied to many tasks in Natural Language Processing (NLP) \cite{radford2018improving,devlin2018bert,dai2019transformer,yang2019xlnet} and computer vision \cite{Yue_2021_ICCV,Arnab_2021_ICCV,Liu_2021_ICCV,Chen_2021_ICCV,Graham_2021_ICCV,Chen_2021_ICCV,Zou_2021_CVPR,Wang_2021_CVPR,Chen_2021_CVPR,nguyen2023micron,truong2022conda,truong2023fredom,phacounting,nguyeniccv,nguyenssl}. Nguyen et al. \cite{Nguyen_2021_CVPR} introduced a new Transformer based architecture for visual clustering. 
This method treats a cluster as a sequence 
starting with a cluster's centroid and followed by its neighbors with decreasing order of similarities.
Then, Clusformer predicts 
which vertex is the same class as the centroid. Chen et al. \cite{omhc} 
leverage transformer-based approach as the feature encoder for face clustering.  

\noindent
\textbf{Fairness in Computer Vision.}
Fairness has garnered much attention in recent computer vision and deep learning research \cite{Xu_2021_CVPR, buolamwini2018gender, chen2021understanding, garcia2019harms, hazirbas2021towards, jung2022learning, li2019repair, misra2016seeing, terhorst2020face, wang2022measuring, wang2019racial, wang2019balanced, yang2022explaining}. The most common objective 
is to improve fairness by lowering the model's accuracy disparity between images from various demographic sub-groups. Facial recognition is one of the most common topics, attracting numerous researchers. There are notable datasets for fair face recognition such as the Diversity in Faces (DiF) dataset \cite{merler2019diversity}, Racial Faces in-the-Wild (RFW), BUPT-GlobalFace, BUPT-BalancedFace \cite{Wang_2019_ICCV, wang2021meta}.
Fairness in the visual clustering problem is also addressed in \cite{deep_fair_clustering, deep_clustering_based_fair_outlier_detection}. Li et al. \cite{deep_fair_clustering} 
proposed a deep fair clustering method to hide sensitive attributes. %
Song et. al \cite{deep_clustering_based_fair_outlier_detection} focused on 
fairness regarding unsupervised outlier detection and proposed a Deep Clustering based Fair Outlier Detection framework (DCFOD) to simultaneously detect the validity and group fairness. Experiments were conducted on the relatively small MNIST database. These prior works \cite{deep_fair_clustering, deep_clustering_based_fair_outlier_detection} were not designed to be robust and scalable. Training these models is expensive as they require demographic attributes costly to annotate. In addition, these clustering approaches are not practical when dealing with unseen/unknown subjects as these methods require defining the number of clusters prior.

To the best of our knowledge, this is the first work dealing with \textit{large-scale clustering with the fairness criteria without involving demographic attributes during training}.

\section{Problem Formulation}

Let $\mathcal{D}$ be a set of $N$ data points to be clustered. We define $\mathcal{H}: \mathcal{D} \rightarrow \mathcal{X} \subset \mathbb{R}^{N \times d}$ as a function that embeds these data points to a latent space $\mathcal{X}$ of $d$ dimensions. 
Let $\hat{Y}$ be a set of ground-truth cluster IDs assigned to the corresponding data points. Since $\hat{Y}$ has a maximum of $N$ different cluster IDs, we can represent $\hat{Y}$ as a vector of $\left[0, N-1\right]^{N}$. 
A deep clustering algorithm $\Phi: \mathcal{X} \rightarrow Y \in \left[0, N-1\right]^{N}$ is defined as a mapping that maximizes a clustering metric $\sigma(Y, \hat{Y})$.
As $\hat{Y}$ is not accessible during the training stage, the number of clusters and samples per cluster are not given beforehand.
Therefore, \textit{divide and conquer} philosophy can be adopted to relax the problem. In particular, a deep clustering algorithm $\Phi$ can be divided into three sub-functions: \textit{pre-processing} $\mathcal{K}$, \textit{in-processing} $\mathcal{M}$, and \textit{post-processing}  $\mathcal{P}$ functions. Formally, $\Phi$ can be presented as in Eqn. \eqref{eqn:ClusteringAlgorithm}.
\begin{equation} \label{eqn:ClusteringAlgorithm}
    \Phi(\mathcal{X}) = \{\mathcal{M} \circ \mathcal{K}(\mathbf{x}_i, n)\}_{\mathbf{x}_i \in \mathcal{X}}
\end{equation} 
where $\mathcal{K}$ denotes an unsupervised clustering algorithm on deep features $\mathbf{x}_i$ of the $i$-th sample. $\text{k-NN}$ is usually a common choice for $\mathcal{K}$ \cite{yang2020learning,yang2019learning,Shen_2021_CVPR,omhc,Nguyen_2021_CVPR}.  
In addition, for the post-processing $\mathcal{P}$ function, it is usually a rule-based algorithm to merge two or more clusters into one when they share high concavities \cite{yang2019learning,yang2020learning,Shen_2021_CVPR}. Since $\mathcal{P}$ does not depend on the input data points nor is a learnable function,
the parameters $\theta_{\mathcal{M}}$ of $\Phi$ can be optimized via this objective function:
\begin{equation}
    \label{eq:clustering}
    \theta^*_{\mathcal{M}} = \arg \min_{\theta_{\mathcal{M}}} \mathbb{E}_{\mathbf{x}_i \sim p(\mathcal{X})} \left[ \mathcal{L} (\mathcal{M} \circ \mathcal{K}(\mathbf{x}_i, n), \hat{q}_i) \right]
\end{equation}
where $k$ is the number of the nearest neighbors of input $\mathbf{x}_i$, $\hat{q_i}$ is the objective of $\mathcal{M}$ to be optimized, and $\mathcal{L}$ denotes a suitable clustering or classification loss.

Let $G = \{g_1, g_2, \dots g_p\}$ be a set of sensitive attributes (i.e., race, gender, age). 
We further seek predictions for $\Phi$ to be accurate and fair to $G$. In other words, a \textit{fairness criteria} requires $Y$ not to be biased in favor of one attribute or another, i.e., 
$P(Y \vert G = g_1) = P(Y \vert G = g_2) = \dots = P(Y \vert G = g_p)$. 
To achieve this goal, we define $\mu_i =  \mathbb{E}_{\mathbf{x}_i \sim p(g_i)}\left[\mathcal{L^{\textit{fair}}} (\mathcal{M} \circ \mathcal{K}(\mathbf{x}_i, n), \hat{q}_i) \right]$ as the expected value of loss function $\mathcal{L^{\textit{fair}}}$ from data points belong to a group with the sensitive attribute $g_i$, where $\mathcal{L}^{\textit{fair}}$ is a suitable fairness loss. Then, the fairness discrepancy between groups can be defined as in Eqn. \eqref{eq:fairness}.
\begin{equation} \label{eq:fairness}
\begin{split}    
    \Delta_{DP}(\Phi) = \frac{1}{2}\left(\sum_{i=1}^{p} \sum_{j=1}^{p} \vert\mu_i - \mu_j\vert\right)
\end{split}
\end{equation}
In order to produce a fair prediction across sensitive attributes, $\Delta_{DP}(\Phi)$ should be minimized during the learning process. Therefore, from Eqn. \eqref{eq:clustering} and Eqn. \eqref{eq:fairness}, the objective function with fairness factor can be reformulated as:
\begin{equation}
\label{eq:fairness_object_tive}
\begin{split}
\theta^*_{\mathcal{M}} &= \arg \min_{\theta_{\mathcal{M}}} \mathbb{E}_{\mathbf{x}_i \sim p(\mathcal{X})} \left[ \mathcal{L} ( \mathcal{M}\circ \mathcal{K}(\mathbf{x}_i, n), \hat{q}_i) \right] + \lambda \Delta_{DP}(\Phi)  \\
&= \arg \min_{\theta_{\mathcal{M}}} \mathbb{E}_{\mathbf{x}_i \sim p(\mathcal{X})} \left[ \mathcal{L} (\mathcal{M} \circ \mathcal{K}(\mathbf{x}_i, n), \hat{q}_i) \right] \\ 
& \quad \quad \quad \quad \quad \quad \quad \quad \quad +   \frac{\lambda}{2}\left(\sum_{i=1}^{p} \sum_{j=1}^{p} \vert\mu_i - \mu_j\vert\right)
\raisetag{50pt}
\end{split}
\end{equation}

\noindent
\textbf{Theoretical Analysis.} From the definition of $\mu_i$, since $\mathcal{L}^{\textit{fair}} (\mathcal{M}\circ \mathcal{K}(\mathbf{x}, n), \hat{q}_i)$ is a loss function (i.e., Binary-cross entropy), the expectation of this function is greater than 0. 

\begin{lemma}
\label{lemma_1}
$\left\vert\mu_i - \mu_j\right\vert \leq \mu_i + \mu_j \quad \quad \forall \mu_i, \mu_j \geq 0$
\end{lemma}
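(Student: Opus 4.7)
The plan is to prove the inequality by a direct case analysis, since the claim is essentially the triangle inequality specialized to non-negative reals. Because $\mu_i, \mu_j \geq 0$ by the preceding observation that $\mathcal{L}^{\textit{fair}}$ is a non-negative loss (e.g., binary cross-entropy) and expectations of non-negative quantities are non-negative, the two numbers live in $[0, \infty)$, and the absolute difference of two non-negative reals never exceeds their sum.

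Concretely, I would split on the sign of $\mu_i - \mu_j$. If $\mu_i \geq \mu_j$, then $|\mu_i - \mu_j| = \mu_i - \mu_j$, and since $\mu_j \geq 0$ implies $-\mu_j \leq \mu_j$, adding $\mu_i$ to both sides yields $\mu_i - \mu_j \leq \mu_i + \mu_j$, which is exactly the desired bound. The case $\mu_i < \mu_j$ is symmetric: $|\mu_i - \mu_j| = \mu_j - \mu_i \leq \mu_j + \mu_i$ using $\mu_i \geq 0$. Alternatively, a one-line argument via the ordinary triangle inequality gives
\begin{equation*}
|\mu_i - \mu_j| = |\mu_i + (-\mu_j)| \leq |\mu_i| + |\mu_j| = \mu_i + \mu_j,
\end{equation*}
where the last equality uses non-negativity of both terms.

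Honestly, there is no real obstacle here; the statement is a one-line consequence of $|a| \leq a$ for $a \geq 0$ combined with the triangle inequality. The only subtle point worth stating explicitly in the write-up is the justification that $\mu_i, \mu_j \geq 0$, which follows from the fact that they are expectations of a non-negative loss, as noted in the paragraph immediately preceding the lemma. The reason this otherwise trivial fact is elevated to a lemma is presumably that it will be chained in the sequel to upper-bound the fairness discrepancy $\Delta_{DP}(\Phi)$ by a sum $\sum_i \mu_i$, giving a tractable surrogate that can be plugged into the objective in Eqn.~\eqref{eq:fairness_object_tive}, so the real work will come in the subsequent lemma(s), not here.
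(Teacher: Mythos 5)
Your proof is correct. It differs mildly from the paper's own argument: the paper avoids any case split by squaring both sides --- since both $\left\vert\mu_i - \mu_j\right\vert$ and $\mu_i + \mu_j$ are non-negative, the inequality is equivalent to $\left(\mu_i - \mu_j\right)^2 \leq \left(\mu_i + \mu_j\right)^2$, which expands and cancels to $-\mu_i\mu_j \leq \mu_i\mu_j$, true because $\mu_i\mu_j \geq 0$. Your route instead splits on the sign of $\mu_i - \mu_j$ (or equivalently invokes the triangle inequality plus $\vert a \vert = a$ for $a \geq 0$). Both arguments are one-liners and equally rigorous; the squaring trick is marginally slicker in that it needs no cases, while your version makes the dependence on non-negativity of each $\mu$ individually more transparent. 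Your closing remark about the lemma's role --- replacing the pairwise discrepancy $\sum_{i,j}\vert\mu_i - \mu_j\vert$ in $\Delta_{DP}(\Phi)$ by the tractable surrogate $\sum_i \mu_i$ --- correctly anticipates how the paper uses it in Eqn.~\eqref{eq:upper_bound_objective_function}.
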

\begin{proof} Let $\mu_i, \mu_j \geq 0$, we have
\begin{equation}
\begin{split}
&0 \leq \left\vert\mu_i - \mu_j\right\vert \leq \mu_i + \mu_j  \\
\iff &\left(\left\vert\mu_i - \mu_j\right\vert\right)^{2} \leq \left(\mu_i + \mu_j\right)^{2} \\
\iff &\mu_i^{2} - 2\mu_i \mu_j + \mu_j^2 \leq \mu_i^{2} + 2\mu_i \mu_j + \mu_j^2   \\
\iff &-\mu_i \mu_j \leq \mu_i \mu_j
\end{split}
\end{equation}
\raisetag{50pt}
\end{proof}
\noindent
Applying the Lemma \ref{lemma_1} to Eqn (\ref{eq:fairness_object_tive}), a new fairness objective function can be derived as,
 \begin{equation} \label{eq:upper_bound_objective_function}
 \begin{split}
     \theta^*_{\mathcal{M}} 
     &\leq \arg \min_{\theta_{\mathcal{M}}} \mathbb{E}_{\mathbf{x}_i \sim p(\mathcal{X})} \left[ \mathcal{L} (\mathcal{M} \circ \mathcal{K}(\mathbf{x}, n), \hat{q}_i) \right] 
     + \lambda \sum_{i=1}^{p} \mu_i \\
    &\leq \arg \min_{\theta_{\mathcal{M}}} \underbrace{\mathbb{E}_{\mathbf{x}_i \sim p(\mathcal{X})} \left[ \mathcal{L} (\mathcal{M} \circ \mathcal{K}(\mathbf{x}_i, n), \hat{q}_i) \right]}_{O1: \quad \text{Maximize clustering performance}} \\ 
     & \quad \quad + \lambda \sum_{i=1}^{p} \underbrace{\mathbb{E}_{x_i \sim p(g_i)}\left[\mathcal{L}^{\textit{fair}} (\mathcal{M} \circ \mathcal{K}(\mathbf{x}_i, n), \hat{q}_i) \right]}_{O2: \quad \text{Maximize \textbf{fairness}}}
 \end{split}
 \raisetag{50pt}
 \end{equation}

In practice, each sensitive distribution $p(g_i)$ is unknown, expensive to measure, or even shifts over time. 
Moreover, some sensitive attributes have large and dominant samples, while the minor groups have few samples. %
Thus, optimizing the second objective $\left( O2 \right)$ and maintaining fairness in clustering remains challenging.
Subsection \ref{subsec:bias} analyzes the biases in clustering and their relationships to cluster purity.

\subsection{Bias in The Clustering and its effects} \label{subsec:bias}
\noindent
Since training data points $\mathcal{D}$ for a clustering approach are collected in some limited environments and constraints, the distributions of sensitive attributes are usually imbalanced.
This data type often leads to unfairness in every learning stage and the overall system.
Let $D_{major}$ and $D_{minor}$ represent the samples in the major and minor attributes, respectively.
Because $\vert\mathcal{D}_{major}\vert \gg \vert\mathcal{D}_{minor}\vert$, a feature extractor $\mathcal{H}$ trained on $\mathcal{D}$ tends to generate more discriminative features for $D_{major}$ than $D_{minor}$.

\noindent
\textit{\textbf{Bias in Cluster Purity.}} A domino effect from $\mathcal{D}$ to $\mathcal{H}$ also leads to another bias on the predicted clusters, namely \textbf{\textit{cluster purity}}. Particularly, latent features $\mathbf{x}_i \in \mathcal{X}$ are employed to $\mathcal{K}$ to construct predicted clusters denoted as $\mathcal{N}(\mathbf{x}_i) = \mathcal{M} \circ \mathcal{K}(\mathbf{x}_i, n)$. As latent features of samples with sensitive attributes belonging to $D_{major}$ are discriminative, their correlations are quite strong. Therefore, their constructed cluster has many positive samples, making its purity very high. Meanwhile, weak connections between samples of $D_{minor}$ may reflect in many noisy samples within a cluster, causing very low-quality clusters.
Let the cluster purity be the ratio of the number of true positive samples within a predicted cluster:
\begin{equation}
    \label{eq:purity}
    \gamma_i(\mathbf{x}_i) = \frac{\vert\mathcal{N}^{+}(\mathbf{x}_i)\vert}{n - n^{-}}
\end{equation}
where $\mathcal{N}^{+}(\mathbf{x}_i) = \{ \{\mathbf{x}_{j}\}\vert \mathbf{x}_{j} \in \mathcal{N}(\mathbf{x}_i) \land y_j = y_i\}$ denotes the true positive set in the predicted cluster $\mathcal{N}(\mathbf{x}_i)$, and $n^{-}$ is the number of predicted negative samples. $y_i, y_j \in Y$ are the corresponding cluster IDs. Noted that the cluster purity $\gamma_i(\mathbf{x}_i)$ provides the rate of correctly predicted samples and indicates the rate of positive samples that cannot be identified by $\Phi$. Due to the bias,
 $\frac{1}{\vert\mathcal{D}_{major}\vert} \sum_{\mathbf{x}_i}^{\mathcal{X}_{major}} \gamma_i(\mathbf{x}_i) \gg \frac{1}{\vert\mathcal{D}_{minor}\vert} \sum_{\mathbf{x}_i}^{\mathcal{X}_{minor}} \gamma_i(\mathbf{x}_i)$.

In general, directly mitigating this kind of bias via a \textit{``perfect balanced training dataset''} is infeasible due to (1) a considerable effort to collect millions of images and their annotations for sensitive attributes; and (2) numerous enumerations of various sensitive attributes such as race, age, gender. 
Therefore, rather than focusing on constructing a balanced training dataset for different sensitive attributes, we proposed a \textit{penalty loss to promote the purity consistency of all clusters across demographic groups}. In this way, the rate of correctly predicted positive samples, as well as the missing positive samples, can be maintained to be similar among all clusters in both $\mathcal{D}_{minor}$ and $\mathcal{D}_{major}$, and, therefore, enhance the fairness in model's predictions. 
In addition, we further proposed an \textit{Intraformer architecture to encourage more robust connections between positive samples} that are far away from the cluster's centroid. It can effectively enhance the purity of hard clusters belonging to minority groups.

\section{Clustering Purity Penalty Loss} \label{sec:puritylossa}
\subsection{Clustering Accuracy Penalty}

In previous methods, \cite{yang2019learning,yang2020learning,Shen_2021_CVPR,Nguyen_2021_CVPR}, clustering performance is optimized via Binary Cross Entropy (BCE) or Softmax function. However, none of these loss functions reflect clustering metrics such as Fowlkes-Mallows (denoted as $F_P$ in \cite{yang2019learning,yang2020learning,Shen_2021_CVPR,Nguyen_2021_CVPR}).
To achieve better performance, we introduce a novel loss function named Fowlkes-Mallows Loss. To the best of our knowledge, no study has been conducted on adopting a supervised loss for the unsupervised clustering problem, so this is the first time Fowlkes-Mallows-based loss has been introduced in deep learning.
Given an input cluster $\mathcal{N}(\mathbf{x}_i)$, the corresponding output of the network is $q_i \in [0, 1]$ 
estimating a probability for the $i$-th positive sample having the same cluster ID as the centroid or not.
The Fowlkes-Mallows Index (FMI) is measured as, 
\begin{equation}
    f(q_i, \hat{q}_i) = \frac{TP}{\sqrt{(TP + FN) (TP + FP)}}
\end{equation}
Since $0 \leq FMI \leq 1$, to maximize FMI, we minimize $1 - f(q_i, \hat{q}_i)$.

\noindent \textbf{Theoretical Analysis}. Since $TP, FN, \text{and } FP \geq 0$, following Cauchy–Schwarz inequality, we have:
\begin{equation}
    \sqrt{(TP + FN) (TP + FP)} \leq \frac{1}{2}(2TP + FN + FP)
\end{equation}
The \textbf{upper bound} of $1 - f(q_i, \hat{q}_i)$ is derived as follows:
\begin{equation}
\begin{split}
    1 - f(q_i, \hat{q}_i) &\leq 1 - \frac{2TP}{2TP + FN + FP}  \\
                &= \frac{FN + FP}{2TP + FN + FP} = \mathcal{L}^{FMI}
\end{split}
\end{equation}
From this observation, we can maximize the FMI by minimizing the function $\mathcal{L}^{FMI}$. In practice, the implementation of $\mathcal{L}^{FMI}$ is presented in Algorithm \ref{algo:new_loss}. The converging point is when $FP = FN = 0$, and all $TP$ samples are estimated correctly. Consequently, we adopt $\mathcal{L}^{FMI}$ for $\mathcal{L}$ in Eqn. \eqref{eq:upper_bound_objective_function} to maximize the clustering performance.
\begin{algorithm}[tb]
   \caption{Pseudocode of Fowlkes-Mallows Loss}
   \label{algo:new_loss}
    \definecolor{codeblue}{rgb}{0.25,0.5,0.5}
    \lstset{
      basicstyle=\fontsize{7.2pt}{7.2pt}\ttfamily\bfseries,
      commentstyle=\fontsize{7.2pt}{7.2pt}\color{codeblue},
      keywordstyle=\fontsize{7.2pt}{7.2pt},
    }
\begin{lstlisting}[language=python]
def fowlkes_mallows_loss(outputs, targets, threshold):
    # outputs: A list of predicted elements
    # targets: A list of ground truths/targets
    # threshold: threshold for outputs binarization
    if threshold is not None:
        outputs = (outputs > threshold).float()
    tp = torch.sum(outputs * targets)
    fp = torch.sum(outputs) - tp
    fn = torch.sum(targets) - tp
    return (fp + fn) / (2 * tp + fn + fp)
\end{lstlisting}
\end{algorithm}

\begin{figure*}[t]
    \includegraphics[width=0.9\textwidth]{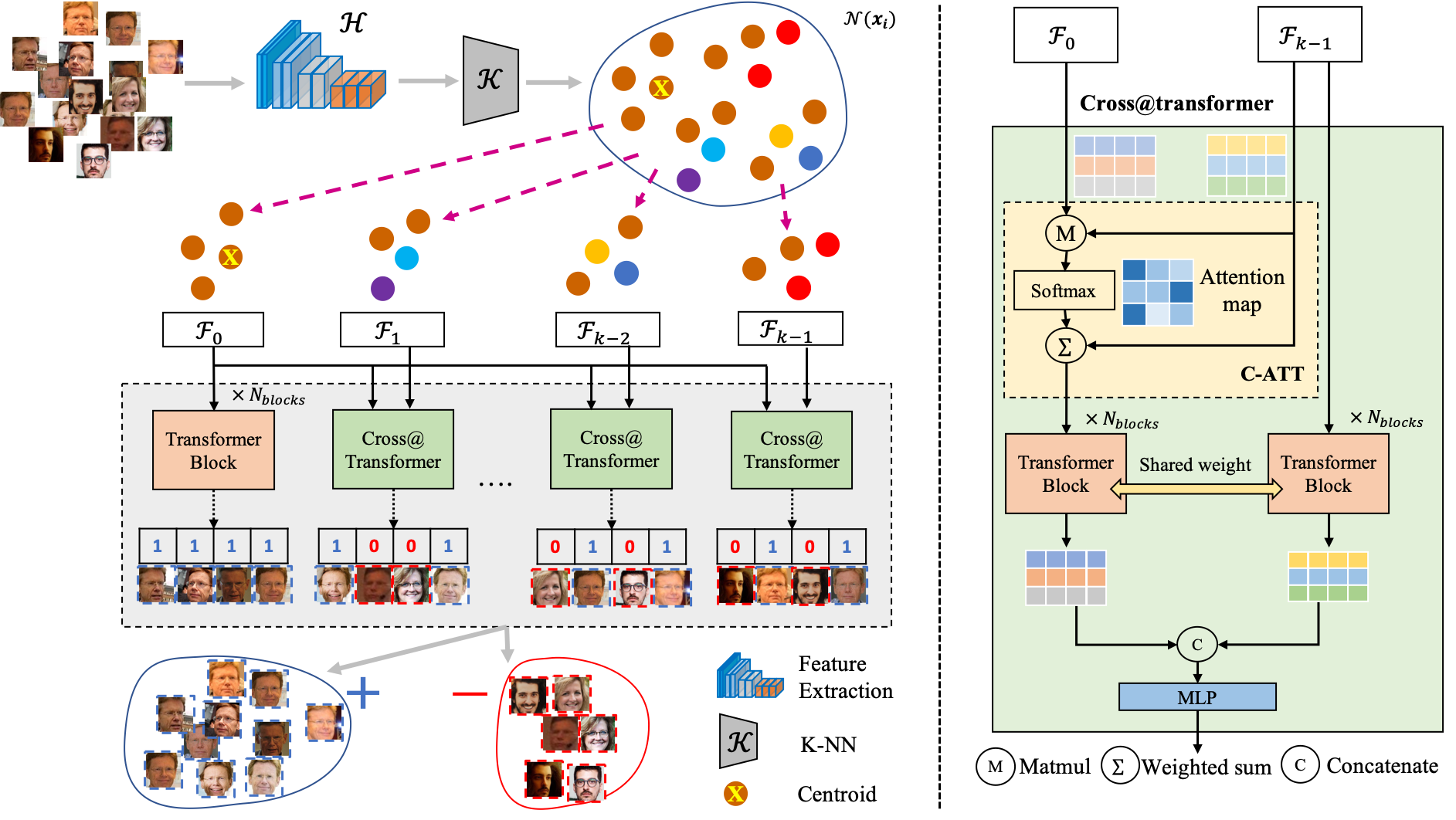}
    \centering
    \caption{Intraformer framework. The images are fed to the feature extractor $\mathcal{H}$ to generate deep feature vectors. For each sample $\mathbf{x}_i$, K-nearest neighbor algorithm $\mathcal{K}$ is employed to construct the cluster $\mathcal{N}(\mathbf{x}_i)$. This cluster is decomposed into $k$ sub-clusters named $\mathcal{F}_0, \dots \mathcal{F}_{k-1}$ and fed into Intraformer concurrently. Only $\mathcal{F}_0$ is passed to the Transformer block while the rest of the sub-clusters go through Cross@Transformer. In this block, C-ATT is employed to explore the global information between two clusters $\mathcal{F}_0$ and $\mathcal{F}_{k-1}$. %
    }
    \label{fig:intraformerarch}
\end{figure*}

\subsection{Fairness Penalty with Clustering Purity Loss}
We aim to solve the second term $(O2)$ of Eqn in parallel to accuracy. \eqref{eq:upper_bound_objective_function} to maximize the 
fairness across demographic groups. As annotations for protected attributes are unknown, a direct grouping of samples for optimization is not feasible. Thus, we propose to address the fairness aspect at the instance level. Promotion of the consistency of clustering purity $\gamma_i$ for all clusters (i.e., enforcing all clusters to have the same purity value) can produce an equivalent solution to minimize the group discrepancy $\Delta_{DP}(\Phi)$ of Eqn. \eqref{eq:fairness}. Formally, $(O2)$ can be reformulated as.
\begin{equation}
    \sum_{i=1}^p \mathbb{E}_{\mathbf{x}_i \sim p(g_i)}\left[\mathcal{L}^{\textit{fair}}(\cdot) \right] \equiv \mathbb{E}_{\mathbf{x}_i \sim p(\mathcal{X})}\left[\mathcal{L}^{\textit{fair}} (\cdot) \right]
\end{equation}
where $\mathcal{L}^{\textit{fair}}(\cdot)$ penalizes the discrepancy between $\gamma_i$ of the $i$-th cluster an a reference $\gamma_f$ within a batch as in Eqn. \eqref{eq:L_fair}.
\begin{equation} \label{eq:L_fair}
    \mathcal{L}^{fair} = \frac{1}{B}\sum \vert\gamma_i - \gamma_{f}\vert
\end{equation}
Here, $B$ is the batch size, 
and $\gamma_{f}$ is the \textit{fairness point} that we want all clusters' purity to converge to. 

\noindent
\textbf{Selection of $\gamma_{f}$}. If $\gamma_{f}$ is too small, the $\mathcal{L}^{fair}$ is easy to optimize, but the lower value of $\gamma_{f}$ will decrease the performance overall. If $\gamma_{f}$ is too large, it will be hard to find the converging points of $\mathcal{L}^{fair}$. In order to solve this problem, we define $\gamma_{f}$ as a \textit{flexible} value and adaptive according to the current performance of the model during the learning stage.
We select $\gamma_{f} = \frac{1}{B} \sum_{i}^{B-1} \gamma_i$ as the average value of $\gamma_{i}$ within a mini-batch. Notice that other selections (such as median and percentile) can still be applicable for $\mathcal{L}^{fair}$.

\noindent
\textbf{Optimizing with $\mathcal{L}^{fair}$}. Initially, the network is warmed up with only $\mathcal{L}^{FMI}$ loss (i.e. $\lambda = 0$). Then, as the network achieves considerable accuracy after warming up, $\lambda$ is gradually increased to penalize the clustering fairness and enhance consistency across clusters.

\noindent
\textbf{The choice of $\gamma_{i}$}. With  $\mathcal{L}^{fair}$ form, besides cluster purity $\gamma_{i}$, other differential metrics can be flexibly selected.
We leave the investigation of these metrics for future work.

\noindent
In the next section, we propose a novel architecture that enhances the performance of hard clusters belonging to minority groups. 
By doing so, $\mathcal{L}^{fair}$ will converge faster.

\section{Intraformer Architecture}
\label{sec:proposed_method}

\begin{figure}[t]
    \centering
\includegraphics[width=1.0 \columnwidth]{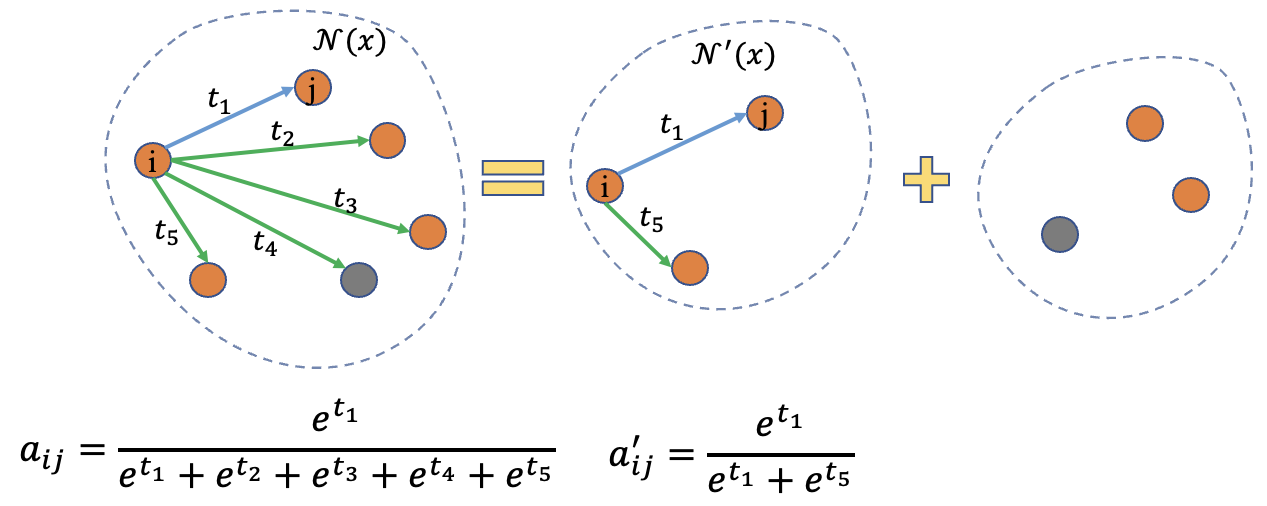}
    \caption{$\mathcal{N}(\mathbf{x})$ is decomposed to a smaller cluster $\mathcal{N}^{'}(\mathbf{x})$ which leads to higher attention score between sample $i^{th}$ and $j^{th}$.}
    \label{fig:revisiting_self_att}
\end{figure}

As clusters $\mathcal{N}(\mathbf{x}_i)$ of the minor group contain a large number of noisy/negative samples (especially when $k$ is large),  $\mathcal{M}$ easily failed to recognize them. 
Therefore, rather than learning directly from all samples of the large cluster at once,
we propose first to decompose $\mathcal{N}(\mathbf{x}_i)$ into {$k$} sub-clusters. Each sub-cluster $C^m_i \subset \mathcal{N}(\mathbf{x}_i)$ has {$s = \frac{n}{k}$ samples where $\mathcal{N}(\mathbf{x}_i) = \bigcup_{m=0}^{k-1} C_i^m$ and $C^0_i \bigcap C^1_i \dots \bigcap C^{k-1}_i = \emptyset$, with $0 \leq m \leq k-1$}. Because $\mathcal{N}(\mathbf{x}_i)$ is an unstructured set, 
two constraints are defined for the sub-clusters to guarantee order consistency, as in Eqn. \eqref{eq:e_importance}.
\begin{equation}
	\begin{cases}
	sim(\mathbf{x}_0, \mathbf{x}_j) \ge sim(\mathbf{x}_0, \mathbf{x}_{j+1}), \\
	sim(\mathbf{x}_0, \mathbf{x}_{s-1}) \ge sim(\mathbf{x}_0, \mathbf{x}^{+}_0),
	\end{cases}
    \label{eq:e_importance}
\end{equation}
where $\mathbf{x}_0, \mathbf{x}_j, \mathbf{x}_{j+1},  \mathbf{x}_{s-1} \in C^k_i$ 
are in a same sub-cluster, while $\mathbf{x}^{+}_0 \in C^{k+1}_i$ denotes the sample in the next sub-clusters. 
Correlations between samples \textit{within a sub-cluster} and \textit{across sub-clusters} are then exploited. 
Moreover, as the size of sub-clusters is kept equal, the balance between positive (hard) and noisy samples is effectively maintained. %

\noindent
\textbf{Intraformer Architecture.} \label{subsec:framework}
Following constraints in Eqn. \eqref{eq:e_importance}, the centroid $\mathbf{x}_i$ belongs to $C^0_i$. 
We define the feature of $C^m_i$ as the concatenated features $\mathcal{F}_m \in \mathbb{R}^{s \times d}$ of all samples in $C^m_i$, i.e. $\mathcal{F}_m \triangleq \text{concat}(\mathbf{x}_0, \mathbf{x}_1, \dots \mathbf{x}_{s-1})$ where $\mathbf{x}_0, \dots \mathbf{x}_{s-1} \in C^m_i$.
Taking $\{\mathcal{F}_m\}_{m=0}^{k-1}$ as input, the proposed Intraformer architecture 
is presented as in Fig. \ref{fig:intraformerarch}. It consists of a Transformer block with a self-attention (S-ATT) and $(k-1)$ Cross@transformers which includes both S-ATT and Cross Attention (C-ATT) Mechanisms. 
Notice that as $C^0_i$ includes the centroid of $\mathcal{N}(\mathbf{x}_i)$, only S-ATT is needed to explore correlations between samples of $C^0_i$. Therefore, we use Transformer blocks for $C^0_i$. For other sub-clusters, they are fed into Cross@transformer blocks with C-ATT mechanism for measuring correlations to $\mathbf{x}_i$.

\begin{figure}[t]
    \centering
    \includegraphics[width=1.0\columnwidth]{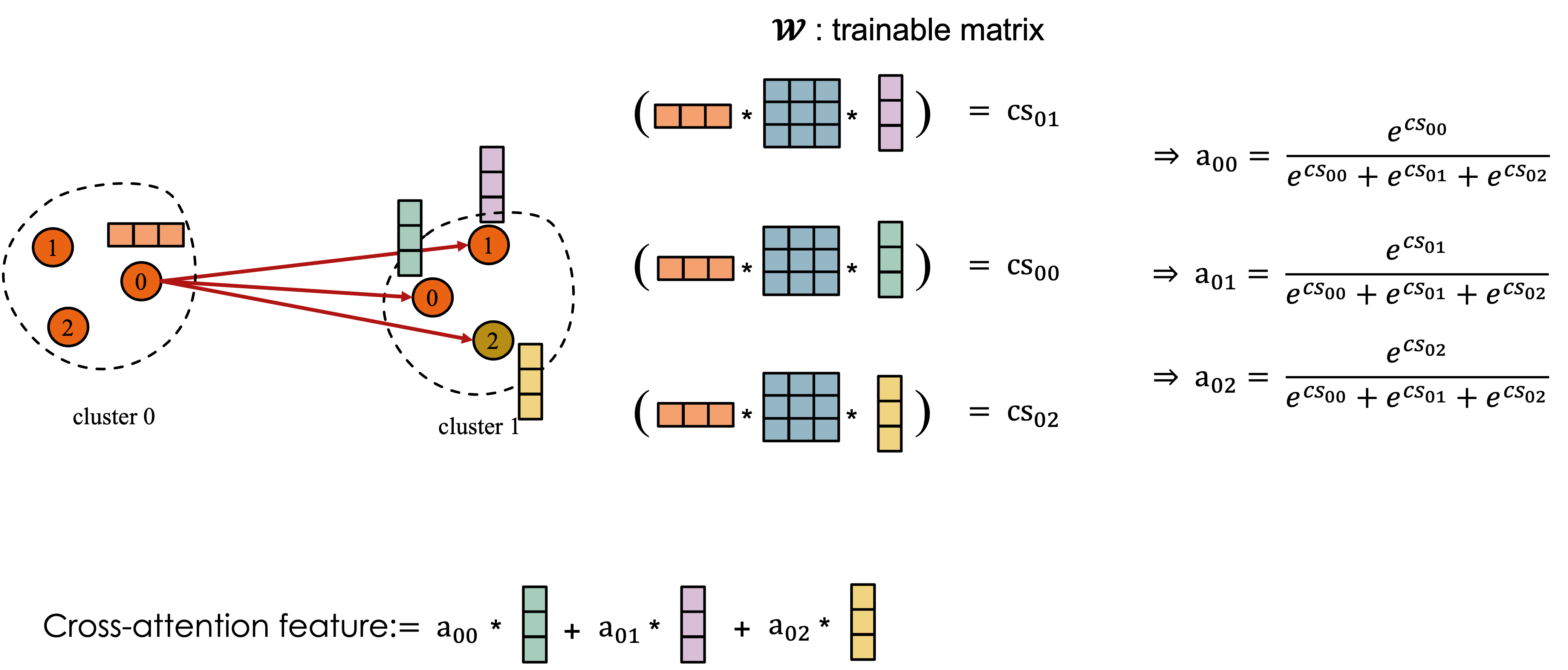}
    \caption{The correlation score is measured by multiple matrix multiplications. The \textcolor{black}{orange vector} and blue \textcolor{black}{green vector} are represented for the feature of sample-0 in \textit{cluster 0}  and sample-1 in  \textit{cluster 1}, respectively. These features are fixed. The \textcolor{black}{blue matrix} are the learnable correlation matrix.
    }
    \label{fig:correlation_score}
\end{figure}

\noindent
\textbf{Decomposing Cluster Benefits Self-Attention Mechanism.}  \label{subsec:self_att_revisit}
Given $i^{th}$ and $j^{th}$ samples of a cluster $\mathcal{N}(\mathbf{x})$, their attention score is measured as $a_{ij} = \frac{e^{\frac{1}{\sqrt{d'}}\mathbf{Q}_i \mathbf{K}_j^T}}{\sum_{m=1}^{n}{e^{\frac{1}{\sqrt{d'}}\mathbf{Q}_i \mathbf{K}_m^T}}}$, 
where $\mathbf{Q}$ and $\mathbf{K}$ denote the \textit{query} and \textit{key} in a transformer block, respectively. This score 
also indicates the importance of their correlation against correlations of $i^{th}$ and other samples in $\mathcal{N}(\mathbf{x})$.
Let  
$\mathcal{N}^{'}(\mathbf{x})$ be a smaller sub-cluster of $\mathcal{N}(\mathbf{x})$ where  $i, j < n' = \vert \mathcal{N}^{'}(\mathbf{x})\vert < n$. In addition, 
as $e^{\frac{1}{\sqrt{d'}}\mathbf{Q}_i \mathbf{K}_m^T} > 0$, 
$a_{ij}$ can be rewritten as follows. %
\begin{equation}
    \label{eq:att_score_2}
\begin{split}
    a_{ij}
    &= \frac{e^{\frac{1}{\sqrt{d'}}\mathbf{Q}_i \mathbf{K}_j^T}}{\sum_{m=1}^{n'}{e^{\frac{1}{\sqrt{d'}}\mathbf{Q}_i \mathbf{K}_m^T}} + \sum_{m=n'}^{n}{e^{\frac{1}{\sqrt{d'}}\mathbf{Q}_i \mathbf{K}_m^T}}} \\
    &< \frac{e^{\frac{1}{\sqrt{d'}}\mathbf{Q}_i \mathbf{K}_j^T}}{\sum_{m=1}^{n'}{e^{\frac{1}{\sqrt{d'}}\mathbf{Q}_i \mathbf{K}_m^T}}} = {a_{ij}}'
\end{split}
\end{equation}
where ${a_{ij}}'$ denotes the attention score of the $i^{th}$ and $j^{th}$ samples in $\mathcal{N}^{'}(\mathbf{x})$. 
Therefore, the number of samples in a cluster greatly impacts their attention scores. The more samples are in a cluster, the weaker correlations 
between samples can be extracted. 
Thus, decomposing $\mathcal{N}(\mathbf{x})$ into multiple smaller sub-clusters will benefit the attention mechanism and help $\mathcal{M}$ focus on the hard samples. %
Fig. \ref{fig:revisiting_self_att} where $t \sim \frac{1}{\sqrt{d'}}\mathbf{Q} \mathbf{K}^T $ briefly demonstrates the cluster decomposition and attention benefits.

\noindent
\textbf{Cross Sub-cluster Attention Mechanism.}
\label{subsec:crossattention}
As the centroid is only assigned to the first sub-clusters $C^0_i$ of $\mathcal{N}(\mathbf{x}_i)$, dividing into non-overlapped sub-clusters leads to an issue of ignoring the interactions between samples in $k-1$ sub-clusters ($C^1_i, \dots C^{k-1}_i$) and its centroid.
Given two samples $\mathbf{x}_i$  and $\mathbf{x}_j$ of two different sub-clusters, 
Since $\mathbf{x}_i$ and $\mathbf{x}_j$ are fixed, two learnable matrices $\mathbf{W}^i \in \mathbb{R}^{d \times d'}$ and $\mathbf{W}^j \in \mathbb{R}^{d \times d'}$ are presented to transform these features to a new $d'$ dimensional hyperspace where their correlations can be computed 
as in Eqn. \eqref{eq:cosine_hyperspace}.
\begin{equation} \label{eq:cosine_hyperspace}
\begin{split}
    cs_{ij} = (\mathbf{x}_{i}\mathbf{W}^i)(\mathbf{x}_{j}\mathbf{W}^j)^T = 
    \mathbf{x}_{i}\mathbf{W}{\mathbf{x}_{j}}^T
\end{split}
\end{equation}
where $\mathbf{W} = \mathbf{W}^i{\mathbf{W}^j}^T$.
The illustration of how to measure this correlation score is shown in Fig. \ref{fig:correlation_score}.
In Eqn. \eqref{eq:cosine_hyperspace}, the number of parameters in the attention module can be reduced by setting $d' = d$. 
Therefore, instead of updating $\mathbf{W}^i$ and $\mathbf{W}^j$, we need only one trainable matrix $\mathbf{W} \in \mathbb{R}^{d \times d}$ called correlation matrix to reduce complexity but still keeps the performance of the attention module. 
As the cosine similarity score is unable to represent the importance of a sample in the cluster, a new attention score is proposed to measure how an out-of-cluster sample  $\mathbf{x}_o \not\in C_i^k$ looks at a sample $\mathbf{x}_j$ in the $k$-th sub-cluster $C_i^k$ 
as follows.
\begin{equation} \label{eq:newscore}
    a(\mathbf{x}_o, \mathbf{x}_j) = \frac{\exp(\mathbf{x}_o \mathbf{W} \mathbf{x}_j^T)}{\sum_{j'=0}^{s-1} \exp(\mathbf{x}_o \mathbf{W} {\mathbf{x}_{j'}}^T)}
\end{equation}
Where $\{ \mathbf{x}_{j'}\}$ denotes all samples of the $k$-th sub-cluster $C_i^k$; and $s$ is the number of samples in $C_i^k$.
Given this new attention score, the attention-based feature of an out-of-cluster sample looking at a cluster $C_i^k$ can be computed using weighted sum features as in Eqn. \eqref{eq:newscore2}.
\begin{equation} \label{eq:newscore2}
    h(\mathbf{x}_o, C_i^k) = \sum_{j=0}^{s-1} a(\mathbf{x}_o, \mathbf{x}_j) \mathbf{x}_j
\end{equation}

\begin{figure}[t]
    \centering
    \includegraphics[width=1.0\columnwidth]{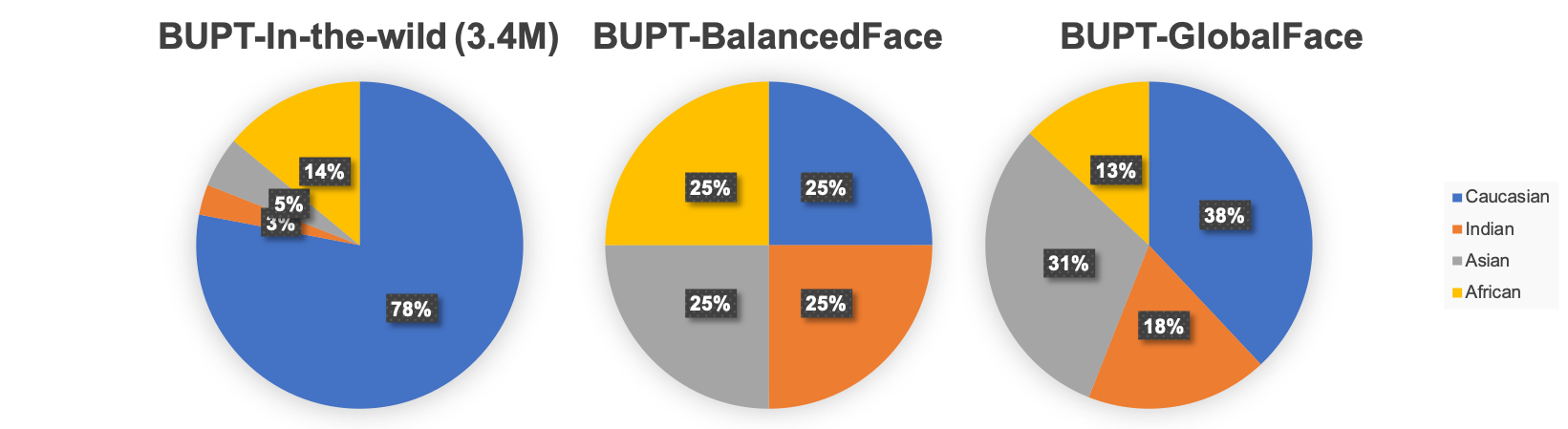}
    \caption{Distributions of ethnicity in BUPT-In-the-wild, BUPT-BalancedFace and BUPT-GlobalFace}
    \label{fig:BUPT_distribution}
\end{figure}

\begin{figure*}[t]
    \centering
    \includegraphics[width=1.0 \textwidth]{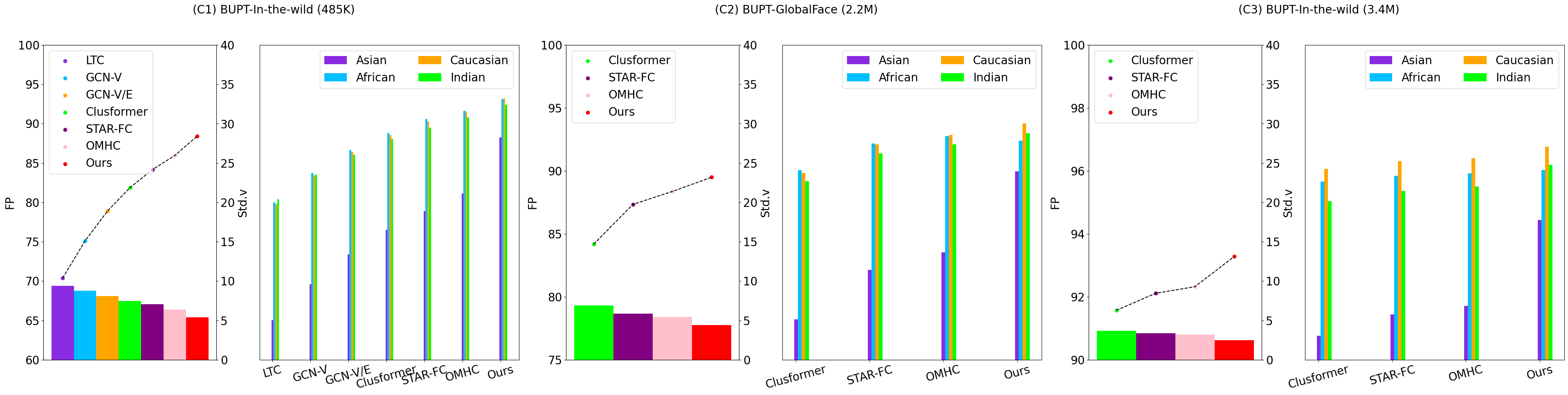}
    \caption{Comparison of fairness \textit{w.r.t} the race of identity on BUPT-BalancedFace}
    \label{fig:balancedface}
\end{figure*}

\begin{figure*}[t]
    \centering
    \includegraphics[width=1.0 \textwidth]{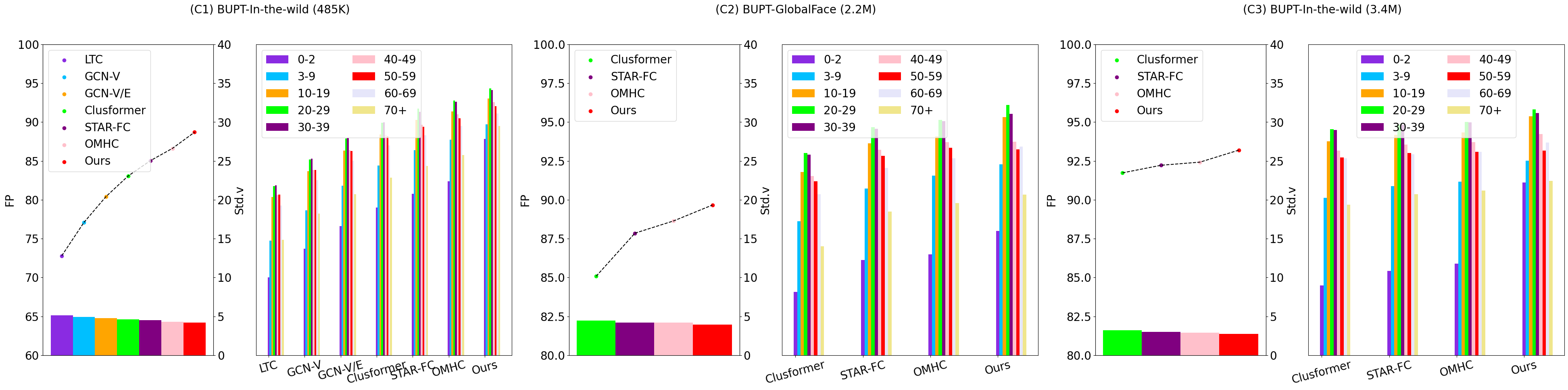}
    \caption{Comparison of fairness \textit{w.r.t} the age of identity on BUPT-BalancedFace}
    \label{fig:age}
\end{figure*}

\begin{figure*}[t]
    \centering
    \includegraphics[width=1.0 \textwidth]{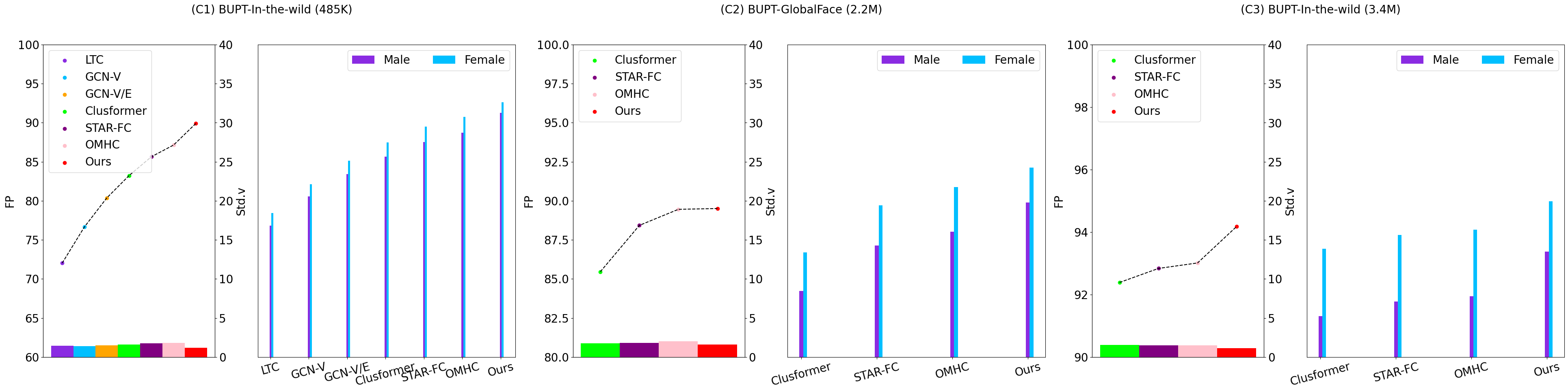}
    \caption{Comparison of fairness \textit{w.r.t} the gender of identity on BUPT-BalancedFace}
    \label{fig:gender}
\end{figure*}

\section{Experiments}
\subsection{Dataset and Metrics} 
\noindent
\textbf{Datasets.}
The visual clustering problem is well defined in \cite{yang2019learning,yang2020learning,Nguyen_2021_CVPR, omhc}. The most common datasets are MS-Celeb-1M, MNIST-Fashion, and Google Landmark. To evaluate the fairness of clustering methods, demographic attributes are needed. Among these datasets, only BUPT-BalancedFace (i.e., different splits of MS-Celeb-1M) provides this information. For this reason,
a subset of MS-Celeb-1M named BUPT-Balancedface \footnote{All data generated or analyzed during this study are included in this published article \cite{Wang_2019_ICCV}} \cite{Wang_2019_ICCV} (i.e., 1.3M images of 28K celebrities with a race-balance of 7K identities per demographic group) is adopted.
We also denote BUPT-In-the-wild (3.4M) as the remaining MS-Celeb-1M after excluding BUPT-Balancedface. This dataset contains 3.4M images of 70K identities 
and is highly biased in terms of the number of identities per race 
as well as 
images per identity.
LTC \cite{yang2019learning}, GCN-V, GCN-E \cite{yang2020learning}, STAR-FC \cite{Shen_2021_CVPR} and Transformer-based methods, i.e: Clusformer \cite{Nguyen_2021_CVPR}, OMHC \cite{omhc} are used as baselines.
Note that LTC, GCN-V, and GCN-V/E generate a large affinity graph without any mechanism to separate this graph into multiple GPUs for training.
Thus, BUPT-In-the-wild cannot be used for their training process.
We propose to decompose BUPT-In-the-wild into smaller parts:

\noindent
\textbf{(C1) BUPT-In-the-wild (485K)}. BUPT-In-the-wild (3.4M) is randomly split into seven parts, each consisting of 485K samples of 7K identities. The race distributions  
are similar to 
BUPT-In-the-wild (3.4M) (see Fig \ref{fig:BUPT_distribution}). These parts are sufficient to train LTC, GCN-V, GCN-E, STAR-FC, Clusformer, OMHC, and Intraformer. 

\noindent
\textbf{(C2) BUPT-GlobalFace (2.2M)}. This subset provided by \cite{Wang_2019_ICCV} contains 2M images from 38K celebrities in total. Its racial distribution is approximately the same as the real distribution of the world’s population. Compared to the \textbf{BUPT-In-the-wild (485K)}, this subset is more balanced in 
racial distribution. 
Only STAR-FC, Clusformer, OMHC, and Intraformer are implemented to run on this subset since there are out-of-memory issues with LTC, GCN-V, and GCN-E.

\noindent
\textbf{(C3) BUPT-GlobalFace (3.4M)}. This configuration evaluates fairness with respect to racial attributes against large-scale data of 3.4M images with a highly biased distribution. Similar to \textbf{BUPT-GlobalFace (2.2M)}, Only STAR-FC, Clusformer, OMHC, and Intraformer are included. %

\begin{figure*}[t]
    \centering
    \includegraphics[width=1.0\textwidth]{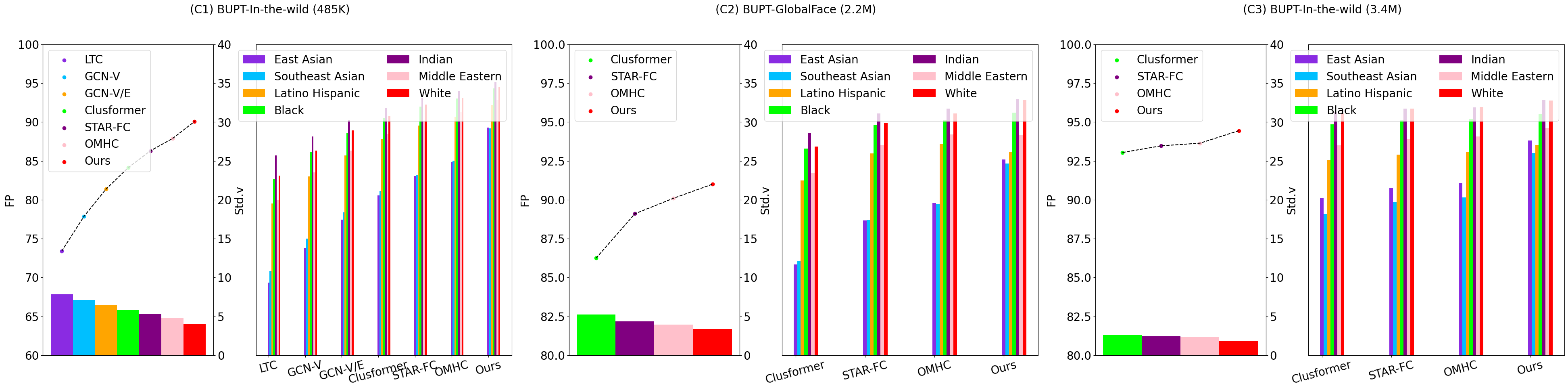}
    \caption{Comparison of fairness \textit{w.r.t} the ethnicity (7 attributes) of identity on BUPT-BalancedFace}
    \label{fig:race_7}
\end{figure*}

\noindent \textbf{Metrics.}
Pairwise F-score ($F_P$), BCubed F-score ($F_B$), and NMI are adopted for evaluation as in previous benchmarking protocols \cite{yang2019learning,yang2020learning,Shen_2021_CVPR,Nguyen_2021_CVPR, omhc}.
To measure the fairness of a method on a demographic, we measure the clustering performance on all attributes of this demographic and estimate the standard deviation \textit{std} among them following previous works \cite{Wang_2019_ICCV, wang2021meta, wang2019skewness}. The lower value of \textit{std}, the better the fairness method is.

\noindent
\textbf{Implementation Details.} %
We get the feature extractor $\mathcal{H}$ by training resnet34 with Arcface \cite{deng2019arcface} loss on BUPT-In-the-wild (3.4) within 20 epochs. The embedding feature is $d=512$ dimension. This model is then adopted to extract latent features followed by a k-NN $\mathcal{K}$ to construct initial clusters. The architecture of the Intraformer contains two main blocks, i.e., Transformer and Cross Transformer.
In each block, transformers are stacked $N_{block}$ times continuously and in each transformers block, self-attention is divided into multiple heads $N_{head}$. 
$k = 4$, $N_{block} = 6$ and $N_{head} = 4$, and $n = 256$. The network is trained in 10 epochs with Adam optimizer \cite{kingma2014adam}. The learning rate starts at 0.0001 and reduces following the cosine annealing \cite{loshchilov2016sgdr}. The loss weight of $(O1)$ equals $(O2)$ in all experiments.

\subsection{Results}

\begin{table*}[!ht]
\centering
\caption{Comparison of fairness \textit{w.r.t} the ethnicity of identity on BUPT-BalancedFace. The results are reported in $F_P$, $F_B$ and $NMI$ metrics. Asi: \textit{Asian}, Afr: \textit{African}, Cau: \textit{Caucasian}, Ind: \textit{Indian}}
\setlength{\tabcolsep}{3pt}
\resizebox{1.\textwidth}{!}{
    \begin{tabular}{|l|cccccc|cccccc|cccccc|}
    \hline
    \multirow{2}{*}{Method} & \multicolumn{6}{c|}{$F_P$} & \multicolumn{6}{c|}{$F_B$} & \multicolumn{6}{c|}{$NMI$} \\
                            & Asi & Afr & Cau & Ind & Mean & STD & Asi & Afr & Cau & Ind & Mean & STD & Asi & Afr & Cau & Ind & Mean & STD  \\

\hline 
\multicolumn{19}{|c|}{(C1) BUPT-In-the-wild (485K)} \\ 
\hline 
LTC~\cite{yang2019learning} & 56.3 & 74.96 & 74.73 & 75.49 & 70.37 & 9.39 & 59.38 & 74.73 & 74.39 & 75.85 & 71.09 & 7.83 & 91.52 & 94.07 & 93.92 & 94.25 & 93.44 & 1.29 \\
GCN-V~\cite{yang2020learning} & 61.98 & 79.68 & 79.3 & 79.43 & 75.1 & 8.75 & 64.59 & 79.2 & 78.76 & 79.35 & 75.48 & 7.26 & 92.48 & 95.05 & 94.91 & 95.06 & 94.38 & 1.27 \\
GCN-V/E~\cite{yang2020learning} & 66.74 & 83.28 & 82.99 & 82.54 & 78.89 & 8.1 & 68.87 & 82.59 & 82.24 & 82.14 & 78.96 & 6.73 & 93.27 & 95.8 & 95.69 & 95.69 & 95.11 & 1.23 \\
Clusformer~\cite{Nguyen_2021_CVPR} & 70.64 & 86.05 & 85.7 & 85.03 & 81.86 & 7.49 & 72.38 & 85.17 & 84.81 & 84.32 & 81.67 & 6.2 & 93.94 & 96.36 & 96.28 & 96.19 & 95.69 & 1.17 \\
STAR-FC~\cite{Shen_2021_CVPR} & 73.62 & 88.22 & 87.87 & 86.89 & 84.15 & 7.04 & 75.18 & 86.91 & 86.68 & 85.88 & 83.66 & 5.68 & 94.26 & 96.85 & 96.67 & 96.59 & 96.09 & 1.22 \\
OMHC~\cite{omhc} & 76.43 & 89.54 & 89.42 & 88.46 & 85.96 & 6.37 & 77.49 & 88.42 & 88.31 & 87.35 & 85.39 & 5.29 & 94.92 & 97.1 & 97.08 & 96.88 & 96.5 & 1.05 \\
\textbf{Ours} & \textbf{80.32} & \textbf{91.4} & \textbf{91.45} & \textbf{90.48} & \textbf{88.41} & \textbf{5.41} & \textbf{80.84} & \textbf{90.02} & \textbf{90.17} & \textbf{89.04} & \textbf{87.52} & \textbf{4.48} & \textbf{95.52} & \textbf{97.44} & \textbf{97.5} & \textbf{97.26} & \textbf{96.93} & \textbf{0.95} \\

\hline 
\multicolumn{19}{|c|}{(C2) BUPT-GlobalFace (2.2M)} \\ 
\hline 
Clusformer~\cite{Nguyen_2021_CVPR} & 73.85 & 88.07 & 87.79 & 87.02 & 84.18 & 6.9 & 75.23 & 87.06 & 86.8 & 86.08 & 83.79 & 5.72 & 94.48 & 96.79 & 96.73 & 96.59 & 96.15 & 1.11 \\
STAR-FC~\cite{Shen_2021_CVPR} & 78.56 & 90.61 & 90.53 & 89.65 & 87.34 & 5.87 & 79.35 & 89.34 & 89.34 & 88.36 & 86.6 & 4.85 & 95.26 & 97.3 & 97.31 & 97.11 & 96.75 & 0.99 \\
OMHC~\cite{omhc} & 80.22 & \textbf{91.29} & 91.4 & 90.54 & 88.36 & 5.44 & 80.73 & 90.19 & 90.06 & 88.81 & 87.45 & 4.52 & 95.31 & 97.34 & 97.71 & 97.06 & 96.85 & 1.06 \\
\textbf{Ours} & \textbf{82.96} & 90.89 & \textbf{92.51} & \textbf{91.59} & \textbf{89.49} & \textbf{4.4} & \textbf{83.92} & \textbf{91.19} & \textbf{91.9} & \textbf{90.33} & \textbf{89.34} & \textbf{3.67} & \textbf{95.93} & \textbf{97.5} & \textbf{97.75} & \textbf{97.44} & \textbf{97.16} & \textbf{0.83} \\

\hline 
\multicolumn{19}{|c|}{(C3) BUPT-In-the-wild (3.4M)} \\ 
\hline 
Clusformer~\cite{Nguyen_2021_CVPR} & 86.14 & 93.48 & 94.09 & 92.55 & 91.57 & 3.67 & 86.53 & 92.89 & 93.48 & 91.61 & 91.13 & 3.16 & 96.5 & 98.11 & 98.26 & 97.81 & 97.67 & 0.8 \\
STAR-FC~\cite{Shen_2021_CVPR} & 87.15 & 93.76 & 94.46 & 93.05 & 92.11 & 3.35 & 87.52 & 93.26 & 93.93 & 92.18 & 91.72 & 2.89 & 96.74 & 98.21 & 98.38 & 97.96 & 97.82 & 0.74 \\
OMHC~\cite{omhc} & 87.57 & 93.87 & 94.6 & 93.25 & 92.32 & 3.22 & 87.93 & 93.4 & 94.12 & 92.41 & 91.97 & 2.78 & 96.84 & 98.25 & 98.43 & 98.02 & 97.89 & 0.72 \\
\textbf{Ours} & \textbf{89.65} & \textbf{94.04} & \textbf{95.15} & \textbf{94.28} & \textbf{93.28} & \textbf{2.47} & \textbf{90.04} & \textbf{93.91} & \textbf{94.87} & \textbf{93.5} & \textbf{93.08} & \textbf{2.11} & \textbf{97.42} & \textbf{98.36} & \textbf{98.63} & \textbf{98.31} & \textbf{98.18} & \textbf{0.53} \\

\hline
\end{tabular}
}
\label{tab:enthic}
\end{table*}

\subsubsection{Fairness on ethnicity}

Fig. \ref{fig:balancedface} shows the performance of clustering algorithms' results on three different training configurations \textbf{(C1)}, \textbf{(C2)}, and \textbf{(C3)}. 
The bars represent the standard deviation of each method and are measured on the right side vertical axis. The scatter plot shows the average $F_P$ score of each and is measured by the vertical axis on the left side of the chart. The second chart in each configuration shows the $F_P$ score of each method in each category. Thus, each method's fairness is determined by the proximity of the bars with greater proximity representing greater fairness.

Overall, our method outperforms others in their respective categories. In configuration \textbf{(C1)}, Intraformer achieved an $F_B$ of 88.41\% which is higher than the SOTA of GCN-based STAR-FC and Transformer-based OMHC by 4\% and 3\%, respectively. In addition, the results demonstrate that our method is the \textit{fairest} by scoring 5.41\% on \textit{std}, lower than the second-best OMHC by 1.1\%. 
In configuration \textbf{(C2)}, when the number of training samples is 4.5 times larger than \textbf{(C1)} and the distribution of demographic attributes is more balanced, improved performance is expected. All methods get higher scores than \textbf{(C1)} in all categories, and Intraformer stills holds the best performance in terms of accuracy (89.49\%) and fairness (4.40\%) among Clusformer, STAR-FC, and OMHC. In configuration \textbf{(C3)}, where the number of samples is 3.4M images, but the distribution of demographic attributes is highly biased, it is interesting to note that the performance is higher than \textbf{(C2)}. Once again, the best results are achieved by Intraformer when trained on the largest dataset, with $F_P$ of 93.28\%  and \textit{std} of 2.47\%. Obtaining \textbf{(C2)} is difficult since demographic annotations are required. Instead, \textbf{(C3)} is preferred as it is easy to collect.
Configuration \textbf{(C3)} illustrates that our method efficiently handles highly biased training databases and achieves better fairness and clustering performance. We observe similar results on the $F_B$ and NMI metrics as well.

\begin{table*}[!ht]
\centering
\caption{Comparison of fairness \textit{w.r.t} the pseudo gender of identity on BUPT-BalancedFace. The results are reported in $F_P$, $F_B$ and $NMI$ metrics}
\setlength{\tabcolsep}{8pt}
    \begin{tabular}{|l|cccc|cccc|cccc|}
    \hline
    \multirow{2}{*}{Method} & \multicolumn{4}{c|}{$F_P$} & \multicolumn{4}{c|}{$F_B$} & \multicolumn{4}{c|}{$NMI$}
    \\ 
                            & Male & Female & Mean & STD & Male & Female & Mean & STD & Male & Female & Mean & STD  \\

\hline 
\multicolumn{13}{|c|}{(C1) BUPT-In-the-wild (485K)} \\ 
\hline 
LTC~\cite{yang2019learning} & 71.0 & 73.03 & 72.02 & 1.44 & 71.94 & 72.22 & 72.08 & 0.2 & 94.39 & 93.88 & 94.14 & 0.36 \\
GCN-V~\cite{yang2020learning} & 75.67 & 77.65 & 76.66 & 1.4 & 76.17 & 76.59 & 76.38 & 0.3 & 95.17 & 94.82 & 95.0 & 0.25 \\
GCN-V/E~\cite{yang2020learning} & 79.29 & 81.43 & 80.36 & 1.51 & 79.43 & 80.18 & 79.81 & 0.53 & 95.77 & 95.58 & 95.68 & 0.13 \\
Clusformer~\cite{Nguyen_2021_CVPR} & 82.09 & 84.36 & 83.22 & 1.61 & 81.95 & 82.97 & 82.46 & 0.72 & 96.24 & 96.19 & 96.22 & \textbf{0.04} \\
STAR-FC~\cite{Shen_2021_CVPR} & 84.39 & 86.87 & 85.63 & 1.75 & 84.09 & 85.11 & 84.6 & 0.72 & 96.5 & 96.45 & 96.47 & 0.04 \\
OMHC~\cite{omhc} & 85.87 & 88.41 & 87.14 & 1.8 & 85.32 & 86.85 & 86.08 & 1.08 & 96.88 & 97.02 & 96.95 & 0.1 \\
\textbf{Ours} & \textbf{89.08} & \textbf{90.74} & \textbf{89.91} & \textbf{1.17} & \textbf{89.2} & \textbf{89.03} & \textbf{89.12} & \textbf{0.12} & \textbf{97.21} & \textbf{97.48} & \textbf{97.34} & 0.19 \\

\hline 
\multicolumn{13}{|c|}{(C2) BUPT-GlobalFace (2.2M)} \\ 
\hline 
Clusformer~\cite{Nguyen_2021_CVPR} & 84.22 & 86.7 & 85.46 & 1.75 & 83.86 & 85.21 & 84.54 & 0.95 & 96.61 & 96.67 & 96.64 & \textbf{0.04} \\
STAR-FC~\cite{Shen_2021_CVPR} & 87.14 & 89.71 & 88.42 & 1.82 & 86.4 & 88.07 & 87.24 & 1.18 & 97.08 & 97.28 & 97.18 & 0.14 \\
OMHC~\cite{omhc} & 88.02 & 90.88 & 89.45 & 2.02 & 87.39 & 88.89 & 88.14 & 1.06 & 96.97 & 97.71 & 97.34 & 0.53 \\
\textbf{Ours} & \textbf{89.88} & \textbf{92.13} & \textbf{89.5} & \textbf{1.59} & \textbf{89.78} & \textbf{90.89} & \textbf{90.34} & \textbf{0.78} & \textbf{97.6} & \textbf{97.74} & \textbf{97.67} & 0.1 \\

\hline 
\multicolumn{13}{|c|}{(C3) BUPT-In-the-wild (3.4M)} \\ 
\hline 
Clusformer~\cite{Nguyen_2021_CVPR} & 91.31 & 93.46 & 92.38 & 1.52 & 90.95 & 92.26 & 91.6 & 0.93 & 97.92 & 98.1 & 98.01 & 0.13 \\
STAR-FC~\cite{Shen_2021_CVPR} & 91.77 & 93.9 & 92.84 & 1.51 & 91.51 & 92.82 & 92.16 & 0.93 & 98.05 & 98.24 & 98.14 & 0.13 \\
OMHC~\cite{omhc} & 91.94 & 94.08 & 93.01 & 1.51 & 91.74 & 93.06 & 92.4 & 0.93 & 98.1 & 98.29 & 98.2 & 0.13 \\
\textbf{Ours} & \textbf{93.37} & \textbf{94.98} & \textbf{94.18} & \textbf{1.14} & \textbf{93.73} & \textbf{94.16} & \textbf{93.94} & \textbf{0.3} & \textbf{98.42} & \textbf{98.59} & \textbf{98.5} & \textbf{0.12} \\

\hline
    \end{tabular}
\label{tab:gender}
\end{table*}

\begin{table*}[!ht]
\centering
\caption{Comparison of fairness \textit{w.r.t} the pseudo ethnicity of identity on BUPT-BalancedFace. The results are reported in $F_P$, $F_B$ and $NMI$ metrics}
\setlength{\tabcolsep}{3pt}
\resizebox{1.\textwidth}{!}{
    \begin{tabular}{|l|cccccc|cccccc|cccccc|}
    \hline
    \multirow{2}{*}{Method} & \multicolumn{6}{c|}{$F_P$} & \multicolumn{6}{c|}{$F_B$} & \multicolumn{6}{c|}{$NMI$}  \\ 
    \cline{2-19}
    & Asian & Indian & Black & White & Mean & STD & Asian & Indian & Black & White & Mean & STD & Asian & Indian & Black & White & Mean & STD \\

\hline 
\multicolumn{19}{|c|}{(C1) BUPT-In-the-wild (485K)} \\ 
\hline 
LTC~\cite{yang2019learning} & 60.15 & 77.26 & 75.81 & 76.47 & 72.42 & 8.2 & 64.49 & 76.9 & 75.8 & 75.86 & 73.26 & 5.87 & 92.83 & 95.28 & 94.85 & 94.84 & 94.45 & 1.1 \\
GCN-V~\cite{yang2020learning} & 65.62 & 81.25 & 80.33 & 80.81 & 77.0 & 7.6 & 69.26 & 80.57 & 79.8 & 79.89 & 77.38 & 5.42 & 93.7 & 96.01 & 95.66 & 95.67 & 95.26 & 1.05 \\
GCN-V/E~\cite{yang2020learning} & 70.18 & 84.46 & 83.64 & 84.22 & 80.63 & 6.97 & 73.18 & 83.51 & 82.8 & 83.05 & 80.64 & 4.98 & 94.43 & 96.59 & 96.27 & 96.32 & 95.9 & 0.99 \\
Clusformer~\cite{Nguyen_2021_CVPR} & 73.95 & 86.89 & 86.27 & 86.67 & 83.45 & 6.34 & 76.39 & 85.75 & 85.16 & 85.36 & 83.17 & 4.52 & 95.04 & 97.04 & 96.74 & 96.8 & 96.41 & 0.92 \\
STAR-FC~\cite{Shen_2021_CVPR} & 77.1 & 89.04 & 88.22 & 88.42 & 85.7 & 5.74 & 78.95 & 87.68 & 87.0 & 87.15 & 85.2 & 4.17 & 95.75 & 97.36 & 97.22 & 97.19 & 96.88 & 0.76 \\
OMHC~\cite{omhc} & 79.33 & 90.22 & 89.64 & 90.06 & 87.31 & 5.33 & 80.96 & 88.83 & 88.2 & 88.56 & 86.64 & 3.79 & 95.9 & 97.65 & 97.36 & 97.47 & 97.1 & 0.81 \\
\textbf{Ours} & \textbf{82.93} & \textbf{92.11} & \textbf{91.34} & \textbf{91.96} & \textbf{89.59} & \textbf{4.45} & \textbf{83.89} & \textbf{90.5} & \textbf{89.64} & \textbf{90.31} & \textbf{88.59} & \textbf{3.15} & \textbf{96.44} & \textbf{97.97} & \textbf{97.64} & \textbf{97.82} & \textbf{97.47} & \textbf{0.7} \\

\hline 
\multicolumn{19}{|c|}{(C2) BUPT-GlobalFace (2.2M)} \\ 
\hline 
Clusformer~\cite{Nguyen_2021_CVPR} & 76.91 & 88.84 & 88.19 & 88.58 & 85.63 & 5.82 & 78.91 & 87.55 & 86.91 & 87.18 & 85.14 & 4.16 & 95.51 & 97.39 & 97.1 & 97.18 & 96.8 & 0.87 \\
STAR-FC~\cite{Shen_2021_CVPR} & 81.3 & 91.31 & 90.61 & 91.13 & 88.59 & 4.87 & 82.6 & 89.8 & 89.03 & 89.56 & 87.75 & 3.45 & 96.21 & 97.84 & 97.53 & 97.67 & 97.31 & 0.75 \\
OMHC~\cite{omhc} & 82.89 & 92.35 & \textbf{91.11} & 91.8 & 89.54 & 4.46 & 84.13 & 90.25 & 89.74 & 90.44 & 88.64 & 3.02 & 96.44 & 97.98 & 97.47 & 97.66 & 97.39 & 0.67 \\
\textbf{Ours} & \textbf{85.37} & \textbf{92.7} & 90.65 & \textbf{92.73} & \textbf{90.36} & \textbf{3.47} & \textbf{86.42} & \textbf{91.74} & \textbf{90.73} & \textbf{91.86} & \textbf{90.19} & \textbf{2.56} & \textbf{96.78} & \textbf{98.1} & \textbf{97.65} & \textbf{98.0} & \textbf{97.63} & \textbf{0.6} \\

\hline 
\multicolumn{19}{|c|}{(C3) BUPT-In-the-wild (3.4M)} \\ 
\hline 
Clusformer~\cite{Nguyen_2021_CVPR} & 88.21 & 94.13 & 93.54 & 94.29 & 92.54 & 2.91 & 88.71 & 93.11 & 92.45 & 93.24 & 91.88 & 2.14 & 97.24 & 98.47 & 98.24 & 98.42 & 98.09 & 0.58 \\
STAR-FC~\cite{Shen_2021_CVPR} & 89.04 & 94.49 & 93.81 & 94.65 & 93.0 & 2.66 & 89.51 & 93.56 & 92.84 & 93.73 & 92.41 & 1.97 & 97.43 & 98.57 & 98.32 & 98.54 & 98.22 & 0.54 \\
OMHC~\cite{omhc} & 89.38 & 94.63 & 93.91 & 94.79 & 93.18 & 2.56 & 89.85 & 93.75 & 92.99 & 93.93 & 92.63 & 1.9 & 97.51 & 98.61 & 98.36 & 98.58 & 98.27 & 0.52 \\
\textbf{Ours} & \textbf{91.21} & \textbf{95.29} & \textbf{94.0} & \textbf{95.31} & \textbf{93.95} & \textbf{1.93} & \textbf{91.62} & \textbf{94.57} & \textbf{93.59} & \textbf{94.73} & \textbf{93.63} & \textbf{1.43} & \textbf{97.97} & \textbf{98.8} & \textbf{98.47} & \textbf{98.77} & \textbf{98.5} & \textbf{0.38} \\

\hline
    \end{tabular}
}
\label{tab:race4}
\end{table*}

\begin{table*}[!ht]
\centering
\caption{Comparison of fairness \textit{w.r.t} the age of identity on BUPT-BalancedFace. The results are reported in $F_P$ metric}
\setlength{\tabcolsep}{10pt}
    \begin{tabular}{|l|ccccccccccc|}
    \hline
    \multirow{2}{*}{Method} & \multicolumn{11}{c|}{$F_P$}  \\ 
    \cline{2-12}
    & 0-2 & 3-9 & 10-19 & 20-29 & 30-39 & 40-49 & 50-59 & 60-69 & 70+ & Mean & STD \\

\hline 
\multicolumn{12}{|c|}{(C1) BUPT-In-the-wild (485K)} \\ 
\hline 
LTC~\cite{yang2019learning} & 62.5 & 68.41 & 75.46 & 77.19 & 77.32 & 75.79 & 75.82 & 74.05 & 68.57 & 72.79 & 5.12 \\
GCN-V~\cite{yang2020learning} & 67.12 & 73.3 & 79.59 & 81.45 & 81.62 & 79.82 & 79.82 & 78.17 & 72.75 & 77.07 & 4.93 \\
GCN-V/E~\cite{yang2020learning} & 70.75 & 77.23 & 82.93 & 84.83 & 84.9 & 83.0 & 82.83 & 81.31 & 75.88 & 80.41 & 4.79 \\
Clusformer~\cite{Nguyen_2021_CVPR} & 73.76 & 80.48 & 85.48 & 87.41 & 87.48 & 85.41 & 85.15 & 83.78 & 78.55 & 83.06 & 4.59 \\
STAR-FC~\cite{Shen_2021_CVPR} & 75.96 & 82.95 & 87.87 & 89.67 & 89.12 & 87.08 & 86.76 & 85.38 & 80.41 & 85.02 & 4.49 \\
OMHC~\cite{omhc} & 77.95 & 84.67 & 89.22 & 90.97 & 90.78 & 88.75 & 88.13 & 86.85 & 82.17 & 86.61 & 4.31 \\
Ours & \textbf{79.79} & \textbf{87.11} & \textbf{91.31} & \textbf{92.88} & \textbf{92.65} & \textbf{90.7} & \textbf{90.04} & \textbf{88.85} & \textbf{86.9} & \textbf{88.69} & \textbf{4.21} \\

\hline 
\multicolumn{12}{|c|}{(C2) BUPT-GlobalFace (2.2M)} \\ 
\hline 
Clusformer~\cite{Nguyen_2021_CVPR} & 76.09 & 82.94 & 87.66 & 89.52 & 89.35 & 87.3 & 86.8 & 85.52 & 80.52 & 85.08 & 4.46 \\
STAR-FC~\cite{Shen_2021_CVPR} & 79.2 & 86.07 & 90.46 & 92.02 & 91.83 & 89.83 & 89.24 & 88.08 & 83.88 & 87.85 & 4.18 \\
OMHC~\cite{omhc} & 79.75 & 87.33 & 91.12 & 92.7 & 92.58 & 90.57 & \textbf{90.02} & 88.99 & 84.68 & 88.64 & 4.19 \\
Ours & \textbf{81.99} & \textbf{88.41} & \textbf{92.97} & \textbf{94.16} & \textbf{93.29} & \textbf{90.6} & 89.86 & \textbf{90.15} & \textbf{85.48} & \textbf{89.66} & \textbf{3.93} \\

\hline 
\multicolumn{12}{|c|}{(C3) BUPT-In-the-wild (3.4M)} \\ 
\hline 
Clusformer~\cite{Nguyen_2021_CVPR} & 84.5 & 90.13 & 93.76 & 94.54 & 94.48 & 93.15 & 92.71 & 92.66 & 89.67 & 91.73 & 3.21 \\
STAR-FC~\cite{Shen_2021_CVPR} & 85.42 & 90.88 & 94.17 & 94.86 & 94.82 & 93.55 & 93.0 & 92.94 & 90.35 & 92.22 & 3.0 \\
OMHC~\cite{omhc} & 85.88 & 91.15 & 94.32 & 95.0 & 94.97 & 93.7 & 93.08 & 93.09 & 90.58 & 92.42 & 2.9 \\
Ours & \textbf{87.11} & \textbf{92.52} & \textbf{95.37} & \textbf{95.82} & \textbf{95.58} & \textbf{94.22} & \textbf{93.16} & \textbf{93.69} & \textbf{91.22} & \textbf{93.19} & \textbf{2.74} \\

\hline
\end{tabular}
\label{tab:age_fp}
\end{table*}

\begin{table*}[!ht]
\centering
\caption{Comparison of fairness \textit{w.r.t} the age of identity on BUPT-BalancedFace. The results are reported in $F_B$ metric}
\setlength{\tabcolsep}{10pt}
    \begin{tabular}{|l|ccccccccccc|}
    \hline
    \multirow{2}{*}{Method} & \multicolumn{11}{c|}{$F_B$}  \\ 
    \cline{2-12}
    & 0-2 & 3-9 & 10-19 & 20-29 & 30-39 & 40-49 & 50-59 & 60-69 & 70+ & Mean & STD \\

\hline 
\multicolumn{12}{|c|}{(C1) BUPT-In-the-wild (485K)} \\ 
\hline 
LTC~\cite{yang2019learning} & 69.47 & 74.87 & 79.12 & 78.4 & 77.82 & 76.43 & 77.89 & 78.46 & 75.85 & 77.36 & \textbf{2.97} \\
GCN-V~\cite{yang2020learning} & 72.77 & 78.46 & 82.61 & 82.19 & 81.7 & 80.01 & 81.22 & 81.52 & 78.62 & 80.79 & 3.06 \\
GCN-V/E~\cite{yang2020learning} & 75.4 & 81.4 & 85.44 & 85.22 & 84.7 & 82.87 & 83.75 & 83.88 & 80.75 & 83.5 & 3.14 \\
Clusformer~\cite{Nguyen_2021_CVPR} & 77.6 & 83.88 & 87.56 & 87.52 & 87.01 & 85.07 & 85.65 & 85.74 & 82.58 & 85.63 & 3.15 \\
STAR-FC~\cite{Shen_2021_CVPR} & 79.56 & 85.94 & 89.58 & 89.34 & 88.55 & 87.0 & 86.9 & 87.13 & 84.22 & 87.33 & 3.09 \\
OMHC~\cite{omhc} & 80.69 & 87.19 & 90.69 & 90.71 & 90.03 & 88.11 & 88.26 & 88.26 & 85.32 & 88.57 & 3.15 \\
Ours & \textbf{82.43} & \textbf{89.14} & \textbf{92.42} & \textbf{92.45} & \textbf{91.7} & \textbf{89.83} & \textbf{89.86} & \textbf{89.82} & \textbf{87.21} & \textbf{90.3} & 3.12 \\

\hline 
\multicolumn{12}{|c|}{(C2) BUPT-GlobalFace (2.2M)} \\ 
\hline 
Clusformer~\cite{Nguyen_2021_CVPR} & 79.31 & 85.82 & 89.38 & 89.39 & 88.71 & 86.79 & 87.12 & 87.16 & 84.05 & 87.3 & 3.17 \\
STAR-FC~\cite{Shen_2021_CVPR} & 81.66 & 88.28 & 91.7 & 91.67 & 90.98 & 89.09 & 89.18 & 89.22 & 86.48 & 89.58 & 3.13 \\
OMHC~\cite{omhc} & 82.41 & 89.19 & 92.32 & 92.37 & 91.68 & 89.62 & 89.78 & 89.95 & 87.23 & 90.27 & 3.1 \\
Ours & \textbf{84.75} & \textbf{90.72} & \textbf{93.82} & \textbf{93.78} & \textbf{92.97} & \textbf{91.14} & \textbf{90.97} & \textbf{91.3} & \textbf{88.75} & \textbf{90.91} & \textbf{2.82} \\

\hline 
\multicolumn{12}{|c|}{(C3) BUPT-In-the-wild (3.4M)} \\ 
\hline 
Clusformer~\cite{Nguyen_2021_CVPR} & 86.05 & 91.58 & 94.38 & 94.32 & 93.88 & 92.63 & 92.67 & 93.13 & 90.82 & 92.16 & 2.58 \\
STAR-FC~\cite{Shen_2021_CVPR} & 86.92 & 92.23 & 94.81 & 94.71 & 94.29 & 93.12 & 93.06 & 93.45 & 91.44 & 92.67 & 2.43 \\
OMHC~\cite{omhc} & 87.31 & 92.48 & 94.96 & 94.85 & 94.47 & 93.33 & 93.22 & 93.62 & 91.67 & 92.88 & 2.35 \\
Ours & \textbf{88.98} & \textbf{93.94} & \textbf{95.91} & \textbf{95.72} & \textbf{95.2} & \textbf{94.21} & \textbf{93.97} & \textbf{94.33} & \textbf{92.69} & \textbf{93.88} & \textbf{2.09} \\

\hline
    \end{tabular}
\label{tab:age_fb}
\end{table*}

\begin{table*}[!ht]
\centering
\caption{Comparison of fairness \textit{w.r.t} the age of identity on BUPT-BalancedFace. The results are reported in $NMI$ metric}
\setlength{\tabcolsep}{12pt}
    \begin{tabular}{|l|ccccccccccc|}
    \hline
    \multirow{2}{*}{Method} & \multicolumn{11}{c|}{$NMI$}  \\ 
    \cline{2-12}
    & 0-2 & 3-9 & 10-19 & 20-29 & 30-39 & 40-49 & 50-59 & 60-69 & 70+ & Mean & STD \\

\hline 
\multicolumn{12}{|c|}{(C1) BUPT-In-the-wild (485K)} \\ 
\hline 
LTC~\cite{yang2019learning} & 95.13 & 95.49 & 95.96 & 95.68 & 95.71 & 95.68 & 95.85 & 95.88 & 95.61 & 95.67 & \textbf{0.25} \\
GCN-V~\cite{yang2020learning} & 95.6 & 96.09 & 96.61 & 96.42 & 96.44 & 96.3 & 96.44 & 96.43 & 96.08 & 96.27 & 0.3 \\
GCN-V/E~\cite{yang2020learning} & 95.97 & 96.58 & 97.14 & 97.02 & 97.0 & 96.8 & 96.89 & 96.86 & 96.44 & 96.74 & 0.36 \\
Clusformer~\cite{Nguyen_2021_CVPR} & 96.29 & 97.0 & 97.55 & 97.48 & 97.44 & 97.19 & 97.23 & 97.19 & 96.75 & 97.12 & 0.4 \\
STAR-FC~\cite{Shen_2021_CVPR} & 96.41 & 97.25 & 97.91 & 97.63 & 97.94 & 97.31 & 97.52 & 97.25 & 96.98 & 97.36 & 0.48 \\
OMHC~\cite{omhc} & 96.72 & 97.58 & 98.15 & 98.12 & 98.01 & 97.74 & 97.7 & 97.67 & 97.22 & 97.66 & 0.46 \\
Ours & \textbf{96.9} & \textbf{97.92} & \textbf{98.49} & \textbf{98.46} & \textbf{98.32} & \textbf{98.03} & \textbf{97.98} & \textbf{97.95} & \textbf{97.54} & \textbf{97.95} & 0.5 \\

\hline 
\multicolumn{12}{|c|}{(C2) BUPT-GlobalFace (2.2M)} \\ 
\hline 
Clusformer~\cite{Nguyen_2021_CVPR} & 96.53 & 97.34 & 97.9 & 97.85 & 97.76 & 97.5 & 97.5 & 97.46 & 97.0 & 97.43 & 0.44 \\
STAR-FC~\cite{Shen_2021_CVPR} & 96.84 & 97.78 & 98.35 & 98.31 & 98.19 & 97.91 & 97.87 & 97.84 & 97.42 & 97.83 & 0.47 \\
OMHC~\cite{omhc} & 96.7 & 97.83 & 98.25 & \textbf{98.66} & 98.17 & 98.0 & 97.8 & 97.79 & 97.59 & 97.86 & 0.54 \\
Ours & \textbf{97.72} & \textbf{98.11} & \textbf{98.73} & 98.66 & \textbf{98.47} & \textbf{98.13} & \textbf{98.06} & \textbf{98.15} & \textbf{97.65} & \textbf{98.19} & \textbf{0.38} \\

\hline 
\multicolumn{12}{|c|}{(C3) BUPT-In-the-wild (3.4M)} \\ 
\hline 
Clusformer~\cite{Nguyen_2021_CVPR} & 97.57 & 98.33 & 98.78 & 98.77 & 98.71 & 98.52 & 98.51 & 98.58 & 98.21 & 98.44 & 0.38 \\
STAR-FC~\cite{Shen_2021_CVPR} & 97.71 & 98.45 & 98.87 & 98.85 & 98.8 & 98.62 & 98.58 & 98.64 & 98.32 & 98.54 & 0.36 \\
OMHC~\cite{omhc} & 97.77 & 98.5 & 98.9 & 98.88 & 98.84 & 98.66 & 98.61 & 98.67 & 98.36 & 98.58 & 0.35 \\
Ours & \textbf{98.03} & \textbf{98.8} & \textbf{99.12} & \textbf{99.08} & \textbf{98.99} & \textbf{98.83} & \textbf{98.74} & \textbf{98.81} & \textbf{98.53} & \textbf{98.77} & \textbf{0.33} \\

\hline
    \end{tabular}
\label{tab:age_nmi}
\end{table*}

\begin{table*}[!ht]
\centering
\caption{Comparison of fairness \textit{w.r.t} the pseudo race (7 classes) of identity on BUPT-BalancedFace. The results are reported in $F_P$ metric. EA: \textit{East Asian}, SA: \textit{Southeast Asian}, LH: \textit{Latino Hispanic}, MD: \textit{Middle Eastern}}
\setlength{\tabcolsep}{12pt}
    \begin{tabular}{|l|ccccccccc|}
    \hline
    \multirow{2}{*}{Method} & \multicolumn{9}{c|}{$F_P$} \\
    \cline{2-10}
    & EA & SA & LH & B & I & ME & W & Mean & STD \\

\hline 
\multicolumn{10}{|c|}{(C1) BUPT-In-the-wild (485K)} \\ 
\hline 
LTC~\cite{yang2019learning} & 61.66 & 63.5 & 74.39 & 78.32 & 82.12 & 74.9 & 78.86 & 73.39 & 7.84 \\
GCN-V~\cite{yang2020learning} & 67.2 & 68.75 & 78.75 & 82.66 & 85.2 & 79.41 & 82.88 & 77.84 & 7.09 \\
GCN-V/E~\cite{yang2020learning} & 71.8 & 72.97 & 82.15 & 85.78 & 87.87 & 82.89 & 86.16 & 81.37 & 6.45 \\
Clusformer~\cite{Nguyen_2021_CVPR} & 75.67 & 76.41 & 84.81 & 88.15 & 89.78 & 85.57 & 88.44 & 84.12 & 5.78 \\
STAR-FC~\cite{Shen_2021_CVPR} & 78.8 & 78.96 & 86.93 & 90.01 & 91.36 & 87.72 & 90.28 & 86.29 & 5.29 \\
OMHC~\cite{omhc} & 81.11 & 81.29 & 88.36 & 91.26 & 92.47 & 89.15 & 91.43 & 87.87 & 4.76 \\
Ours & \textbf{86.62} & \textbf{86.49} & \textbf{90.24} & \textbf{92.9} & \textbf{93.94} & \textbf{91.12} & \textbf{93.18} & \textbf{90.07} & \textbf{3.97} \\

\hline 
\multicolumn{10}{|c|}{(C2) BUPT-GlobalFace (2.2M)} \\ 
\hline 
Clusformer~\cite{Nguyen_2021_CVPR} & 78.74 & 79.1 & 86.85 & 89.93 & 91.41 & 87.61 & 90.12 & 86.25 & 5.24 \\
STAR-FC~\cite{Shen_2021_CVPR} & 83.01 & 83.05 & 89.46 & 92.19 & 93.33 & 90.28 & 92.4 & 89.1 & 4.35 \\
OMHC~\cite{omhc} & 84.67 & 84.55 & \textbf{90.41} & 92.65 & 93.79 & \textbf{91.3} & 93.31 & 90.1 & 3.92 \\
Ours & \textbf{88.9} & \textbf{88.49} & 89.57 & \textbf{93.4} & \textbf{94.68} & 91.24 & \textbf{94.62} & \textbf{91.0} & \textbf{3.36} \\

\hline 
\multicolumn{10}{|c|}{(C3) BUPT-In-the-wild (3.4M)} \\ 
\hline 
Clusformer~\cite{Nguyen_2021_CVPR} & 90.13 & 89.08 & 92.54 & 94.84 & 95.63 & 93.49 & 95.54 & 93.04 & 2.61 \\
STAR-FC~\cite{Shen_2021_CVPR} & 90.77 & 89.87 & 92.91 & 95.13 & 95.86 & 93.91 & 95.86 & 93.47 & 2.41 \\
OMHC~\cite{omhc} & 91.08 & 90.14 & 93.08 & 95.21 & 95.95 & 94.06 & 95.97 & 93.64 & 2.33 \\
Ours & \textbf{93.81} & \textbf{93.0} & \textbf{93.53} & \textbf{95.49} & \textbf{96.42} & \textbf{94.61} & \textbf{96.39} & \textbf{94.44} & \textbf{1.79} \\

\hline
\end{tabular}
\label{tab:race7_fp}
\end{table*}

\begin{table*}[!ht]
\centering
\caption{Comparison of fairness \textit{w.r.t} the pseudo race (7 classes) of identity on BUPT-BalancedFace. The results are reported in $F_B$ metric. EA: \textit{East Asian}, SA: \textit{Southeast Asian}, LH: \textit{Latino Hispanic}, MD: \textit{Middle Eastern}}
\setlength{\tabcolsep}{12pt}
    \begin{tabular}{|l|ccccccccc|}
    \hline
    \multirow{2}{*}{Method} & \multicolumn{9}{c|}{$F_B$} \\
    \cline{2-10}
    & EA & SA & LH & B & I & ME & W & Mean & STD \\

\hline 
\multicolumn{10}{|c|}{(C1) BUPT-In-the-wild (485K)} \\ 
\hline 
LTC~\cite{yang2019learning} & 67.86 & 67.93 & 73.57 & 78.79 & 82.89 & 75.78 & 79.19 & 75.14 & 5.74 \\
GCN-V~\cite{yang2020learning} & 72.42 & 72.38 & 77.54 & 82.58 & 85.66 & 79.64 & 82.85 & 79.01 & 5.19 \\
GCN-V/E~\cite{yang2020learning} & 76.22 & 75.96 & 80.66 & 85.38 & 88.02 & 82.62 & 85.8 & 82.09 & 4.73 \\
Clusformer~\cite{Nguyen_2021_CVPR} & 79.32 & 78.83 & 83.08 & 87.57 & 89.71 & 84.96 & 87.87 & 84.48 & 4.26 \\
STAR-FC~\cite{Shen_2021_CVPR} & 81.89 & 80.92 & 84.93 & 89.35 & 91.28 & 86.92 & 89.37 & 86.38 & 3.96 \\
OMHC~\cite{omhc} & 83.68 & 82.99 & 86.36 & 90.35 & 92.09 & 88.17 & 90.73 & 87.77 & 3.55 \\
Ours & \textbf{86.54} & \textbf{85.61} & \textbf{88.08} & \textbf{91.82} & \textbf{93.4} & \textbf{89.93} & \textbf{92.36} & \textbf{89.68} & \textbf{3.02} \\

\hline 
\multicolumn{10}{|c|}{(C2) BUPT-GlobalFace (2.2M)} \\ 
\hline 
Clusformer~\cite{Nguyen_2021_CVPR} & 81.74 & 81.14 & 84.96 & 89.16 & 91.12 & 86.8 & 89.48 & 86.34 & 3.89 \\
STAR-FC~\cite{Shen_2021_CVPR} & 85.29 & 84.43 & 87.34 & 91.2 & 92.86 & 89.17 & 91.66 & 88.85 & 3.27 \\
OMHC~\cite{omhc} & 86.41 & 85.64 & 88.05 & 91.94 & 93.15 & 89.68 & 92.13 & 89.57 & 2.96 \\
Ours & \textbf{89.1} & \textbf{87.8} & \textbf{89.41} & \textbf{92.84} & \textbf{94.26} & \textbf{91.36} & \textbf{93.85} & \textbf{91.23} & \textbf{2.53} \\

\hline 
\multicolumn{10}{|c|}{(C3) BUPT-In-the-wild (3.4M)} \\ 
\hline 
Clusformer~\cite{Nguyen_2021_CVPR} & 90.78 & 89.8 & 91.08 & 94.19 & 95.28 & 92.66 & 94.94 & 92.68 & 2.18 \\
STAR-FC~\cite{Shen_2021_CVPR} & 91.43 & 90.54 & 91.61 & 94.55 & 95.56 & 93.22 & 95.33 & 93.18 & 2.03 \\
OMHC~\cite{omhc} & 91.74 & 90.82 & 91.84 & 94.65 & 95.67 & 93.44 & 95.48 & 93.38 & 1.95 \\
Ours & \textbf{93.39} & \textbf{92.38} & \textbf{92.78} & \textbf{95.16} & \textbf{96.22} & \textbf{94.4} & \textbf{96.11} & \textbf{94.35} & \textbf{1.56} \\

\hline
    \end{tabular}
\label{tab:race7_fb}
\end{table*}

\begin{table*}[!ht]
\centering
\caption{Comparison of fairness \textit{w.r.t} the pseudo race (7 classes) of identity on BUPT-BalancedFace. The results are reported in $NMI$ metric. EA: \textit{East Asian}, SA: \textit{Southeast Asian}, LH: \textit{Latino Hispanic}, MD: \textit{Middle Eastern}}
\setlength{\tabcolsep}{12pt}
    \begin{tabular}{|l|ccccccccc|}
    \hline
    \multirow{2}{*}{Method} & \multicolumn{9}{c|}{$NMI$} \\
    \cline{2-10}
    & EA & SA & LH & B & I & ME & W & Mean & STD \\

\hline 
\multicolumn{10}{|c|}{(C1) BUPT-In-the-wild (485K)} \\ 
\hline 
LTC~\cite{yang2019learning} & 93.56 & 93.91 & 95.03 & 95.41 & 96.45 & 95.48 & 95.5 & 95.05 & 1.0 \\
GCN-V~\cite{yang2020learning} & 94.4 & 94.67 & 95.73 & 96.2 & 97.03 & 96.17 & 96.27 & 95.78 & 0.94 \\
GCN-V/E~\cite{yang2020learning} & 95.11 & 95.29 & 96.3 & 96.78 & 97.51 & 96.71 & 96.88 & 96.37 & 0.88 \\
Clusformer~\cite{Nguyen_2021_CVPR} & 95.71 & 95.81 & 96.73 & 97.23 & 97.86 & 97.13 & 97.34 & 96.83 & 0.8 \\
STAR-FC~\cite{Shen_2021_CVPR} & 96.02 & 96.22 & 96.97 & 97.38 & 98.1 & 97.29 & 97.43 & 97.06 & 0.73 \\
OMHC~\cite{omhc} & 96.55 & 96.56 & 97.33 & 97.81 & 98.34 & 97.71 & 97.95 & 97.46 & 0.69 \\
Ours & \textbf{97.11} & \textbf{97.02} & \textbf{97.61} & \textbf{98.12} & \textbf{98.6} & \textbf{98.03} & \textbf{98.3} & \textbf{97.83} & \textbf{0.6} \\

\hline 
\multicolumn{10}{|c|}{(C2) BUPT-GlobalFace (2.2M)} \\ 
\hline 
Clusformer~\cite{Nguyen_2021_CVPR} & 96.17 & 96.23 & 97.07 & 97.56 & 98.14 & 97.47 & 97.68 & 97.19 & 0.75 \\
STAR-FC~\cite{Shen_2021_CVPR} & 96.87 & 96.82 & 97.5 & 97.99 & 98.49 & 97.9 & 98.14 & 97.67 & 0.64 \\
OMHC~\cite{omhc} & 97.03 & 97.16 & 97.45 & 97.89 & 98.62 & \textbf{98.16} & 98.53 & 97.84 & 0.64 \\
Ours & \textbf{97.56} & \textbf{97.31} & \textbf{97.63} & \textbf{98.23} & \textbf{98.73} & 98.16 & \textbf{98.54} & \textbf{98.02} & \textbf{0.53} \\

\hline 
\multicolumn{10}{|c|}{(C3) BUPT-In-the-wild (3.4M)} \\ 
\hline 
Clusformer~\cite{Nguyen_2021_CVPR} & 97.78 & 97.72 & 98.17 & 98.63 & 98.96 & 98.55 & 98.82 & 98.38 & 0.49 \\
STAR-FC~\cite{Shen_2021_CVPR} & 97.93 & 97.87 & 98.28 & 98.71 & 99.02 & 98.65 & 98.91 & 98.48 & 0.46 \\
OMHC~\cite{omhc} & 98.0 & 97.94 & 98.32 & 98.73 & 99.05 & 98.69 & 98.95 & 98.53 & 0.44 \\
Ours & \textbf{98.44} & \textbf{98.31} & \textbf{98.5} & \textbf{98.84} & \textbf{99.17} & \textbf{98.87} & \textbf{99.09} & \textbf{98.75} & \textbf{0.33} \\

\hline
    \end{tabular}
\label{tab:race7_nmi}
\end{table*}

\begin{table}[!ht]
\small
\centering
\caption{Effects of Different settings on
numbers of sub-clusters.}
\label{tab:ablationstudy_sk}
\resizebox{1.\columnwidth}{!}{
    \begin{tabular}{@{}|l|cccc|ll|@{}}
    
    \hline
    $k$ & Asian & African & Caucasian & Indian & $F_P$ & $std$ \\
    \hline
    4 & 89.65 & 94.04 & 95.15 & 94.28 & \textbf{93.28} & \textbf{2.47} \\
    5 & 89.13 & 94.26 & 95.14 & 94.07 & 93.15 & 2.72 \\
    8 & 88.83 & 94.21 & 95.06 & 93.94 & 93.01 & 2.83 \\
    10 & 88.52 & 94.12 & 94.95 & 93.79 & 92.85 & 2.92 \\
    16 & 87.94 & 93.96 & 94.72 & 93.44 & 92.52 & 3.09 \\
    \hline
    \end{tabular}
}
\end{table}

\subsubsection{Fairness on different demographics}

We study the fairness of our methods in different demographics such as age, gender, and race. In particular, the pre-trained model on FairFace \cite{karkkainenfairface} is employed to predict these demographics for each subject in the BUPT-BalancedFace database. A threshold of 0.9 is applied to filter out the uncertain samples and keep only the most confident images for evaluation. We conduct experiments and measure fairness on nine ranges of age (0-2, 3-9, 10-19, 20-29, 30-39, 40-49, 50-59, 60-69, and 70+), two groups of gender (Male and Female) and seven racial groups (East Asian, Southeast Asian, Latino Hispanic, Black, Indian, Middle Eastern, White). These results are reported in Fig \ref{fig:age}, Fig \ref{fig:gender}, and Fig \ref{fig:race_7}, respectively.

In addition, we also show the detailed experiment results of clustering performance on different demographics in the table style. Table \ref{tab:enthic}, table \ref{tab:gender}, table \ref{tab:race4}, table \ref{tab:age_fp} \ref{tab:age_fb} \ref{tab:age_nmi}, table \ref{tab:race7_fp} \ref{tab:race7_fb} \ref{tab:race7_nmi} show the full results ($F_P$, $F_B$ and $NMI$) of methods' on BUPT-BalancedFace with respect to different demographics. It is noted that table \ref{tab:enthic} and table \ref{tab:race4} share the same meaning of demographic, i.e., ethnicity, but the labels for ethnicity in \ref{tab:race4} are inferred by a trained FairFace model. Both two tables demonstrate our superior performance compared to previous works.

\subsubsection{Ablation studies}

In this section, we analyze the roles of C-ATT, $\mathcal{L}^{FMI}$ and $\mathcal{L}^{fair}$ on their contributions (see Table \ref{tab:ablationstudy}) and experiment Intraformer with different decomposition settings (see Table \ref{tab:ablationstudy_sk}).
We train the model on BUPT-In-the-wild (3.4M) and evaluate 
on the ethnicity aspect of BUPT-BalancedFace.

\noindent
\textbf{The roles of C-ATT, $\mathcal{L}^{FMI}$ and $\mathcal{L}^{fair}$}. We configure a baseline Intraformer with four sub-clusters using neither C-ATT nor $\mathcal{L}^{FMI}$ nor $\mathcal{L}^{fair}$. 
Unsurprisingly, our baseline achieves an $F_P$ of 91.21\% and $std$ of 3.89\%, which is much lower than previous methods, i.e., Clusformer, STAR-FC and OMHC. 
With $\mathcal{L}^{FMI}$ loss, an improvement of 0.65\% for $F_P$ and 0.3\% for $std$ is achieved. In the third experiment, $\mathcal{L}^{fair}$ is employed to train the model concurrently, and both $F_P$ and $std$ values are improved by 0.8\% and 0.5\%, respectively.  
When C-ATT can further explore correlations between centroid to all samples and encourage hard clusters toward the fairness point, the performance surpasses the state-of-the-art method.

\noindent
\textbf{Stabilization of Intraformer}. There are several ways to decompose a cluster in the dataset into smaller sub-clusters. Table \ref{tab:ablationstudy} shows the performance of Intraformer in different settings. It is important to note that the performance of Intraformer does not fluctuate much across different settings. The average $F_P$ score is 93.28\% $\pm$ 2.47\%. These results emphasize the stabilization and robustness of our approach.

\begin{table}[!t]
\small
\centering
\caption{Ablation Study on  C-ATT, $\mathcal{L}^{fair}$ and $\mathcal{L}^{FMI}$}
\label{tab:ablationstudy}
\resizebox{1.0\columnwidth}{!}{
\begin{tabular}{@{}|lll|cccc|ll|@{}}
\hline
C-ATT & $\mathcal{L}^{fair}$ & $\mathcal{L}^{FMI}$ & Asian & African & Caucasian & Indian & $F_P$ & $std$ \\
\hline

\xmark & \xmark & \xmark & 85.46 & 93.29 & 93.83 & 92.27 & 91.21 & 3.89 \\
\xmark & \xmark & \cmark & 86.68 & 93.64 & 94.29 & 92.81 & 91.86 & 3.50 \\
\xmark & \cmark & \cmark & 88.23 & 94.05 & 94.84 & 93.63 & 92.69 & 3.01 \\
\cmark & \cmark & \cmark & 89.65 & 94.04 & 95.15 & 94.28 & \textbf{93.28} &\textbf{ 2.47} \\
\hline
\end{tabular}
}
\end{table}

\section{Conclusions and Discussions}

Unlike a few concurrent research on the fairness of computer vision, we study how to address the unfair facial clustering problem. We introduce cluster purity as an indicator of demographic bias. Secondly, we propose a novel loss to enforce the consistency of purity between clusters of different groups; thus, fairness can be achieved. Finally, a novel framework for visual clustering was presented to strengthen the hard clusters, which usually come from the minor/biased group. This framework contributes not only to fairness but also to clustering performance overall.\\
\textbf{Limitations}. We formulate the fairness problem for the face clustering as in Eqn \eqref{eq:upper_bound_objective_function}. Fairness can be achieved by optimizing the upper bound as in Eqn \eqref{eq:upper_bound_objective_function}. However, there is a gap in performance between optimizing by Eqn \eqref{eq:upper_bound_objective_function} and by Eqn \eqref{eq:fairness_object_tive}. This problem is further investigated in future work.

\newpage
\bibliographystyle{IEEEtran}
\bibliography{sn-bibliography}%

\begin{thebibliography}{10}
\providecommand{\url}[1]{#1}
\csname url@samestyle\endcsname
\providecommand{\newblock}{\relax}
\providecommand{\bibinfo}[2]{#2}
\providecommand{\BIBentrySTDinterwordspacing}{\spaceskip=0pt\relax}
\providecommand{\BIBentryALTinterwordstretchfactor}{4}
\providecommand{\BIBentryALTinterwordspacing}{\spaceskip=\fontdimen2\font plus
\BIBentryALTinterwordstretchfactor\fontdimen3\font minus
  \fontdimen4\font\relax}
\providecommand{\BIBforeignlanguage}[2]{{%
\expandafter\ifx\csname l@#1\endcsname\relax
\typeout{** WARNING: IEEEtran.bst: No hyphenation pattern has been}%
\typeout{** loaded for the language `#1'. Using the pattern for}%
\typeout{** the default language instead.}%
\else
\language=\csname l@#1\endcsname
\fi
#2}}
\providecommand{\BIBdecl}{\relax}
\BIBdecl

\bibitem{wang2018cosface}
H.~Wang, Y.~Wang, Z.~Zhou, X.~Ji, D.~Gong, J.~Zhou, Z.~Li, and W.~Liu,
  ``Cosface: Large margin cosine loss for deep face recognition,'' in
  \emph{Proceedings of the IEEE Conference on Computer Vision and Pattern
  Recognition}, 2018, pp. 5265--5274.

\bibitem{weyand2020google}
T.~Weyand, A.~Araujo, B.~Cao, and J.~Sim, ``Google landmarks dataset v2-a
  large-scale benchmark for instance-level recognition and retrieval,'' in
  \emph{Proceedings of the IEEE/CVF Conference on Computer Vision and Pattern
  Recognition}, 2020, pp. 2575--2584.

\bibitem{robinson2020face}
J.~P. Robinson, G.~Livitz, Y.~Henon, C.~Qin, Y.~Fu, and S.~Timoner, ``Face
  recognition: too bias, or not too bias?'' in \emph{Proceedings of the
  IEEE/CVF Conference on Computer Vision and Pattern Recognition Workshops},
  2020, pp. 0--1.

\bibitem{wang2019racial}
M.~Wang, W.~Deng, J.~Hu, X.~Tao, and Y.~Huang, ``Racial faces in the wild:
  Reducing racial bias by information maximization adaptation network,'' in
  \emph{Proceedings of the ieee/cvf international conference on computer
  vision}, 2019, pp. 692--702.

\bibitem{wang2020mitigating}
M.~Wang and W.~Deng, ``Mitigating bias in face recognition using skewness-aware
  reinforcement learning,'' in \emph{Proceedings of the IEEE/CVF conference on
  computer vision and pattern recognition}, 2020, pp. 9322--9331.

\bibitem{hupont2019demogpairs}
I.~Hupont and C.~Fern{\'a}ndez, ``Demogpairs: Quantifying the impact of
  demographic imbalance in deep face recognition,'' in \emph{2019 14th IEEE
  International Conference on Automatic Face \& Gesture Recognition (FG
  2019)}.\hskip 1em plus 0.5em minus 0.4em\relax IEEE, 2019, pp. 1--7.

\bibitem{li2020deep}
P.~Li, H.~Zhao, and H.~Liu, ``Deep fair clustering for visual learning,'' in
  \emph{Proceedings of the IEEE/CVF Conference on Computer Vision and Pattern
  Recognition}, 2020, pp. 9070--9079.

\bibitem{zeng2022deep}
P.~Zeng, Y.~Li, P.~Hu, D.~Peng, J.~Lv, and X.~Peng, ``Deep fair clustering via
  maximizing and minimizing mutual information,'' \emph{arXiv preprint
  arXiv:2209.12396}, 2022.

\bibitem{mukherjee2022domain}
D.~Mukherjee, F.~Petersen, M.~Yurochkin, and Y.~Sun, ``Domain adaptation meets
  individual fairness. and they get along,'' \emph{arXiv preprint
  arXiv:2205.00504}, 2022.

\bibitem{wang2021towards}
T.~Wang, Z.~Ding, W.~Shao, H.~Tang, and K.~Huang, ``Towards fair cross-domain
  adaptation via generative learning,'' in \emph{Proceedings of the IEEE/CVF
  Winter Conference on Applications of Computer Vision}, 2021, pp. 454--463.

\bibitem{Liu_2021_ICCV}
J.~Liu, Y.~Wu, Y.~Wu, C.~Li, X.~Hu, D.~Liang, and M.~Wang, ``Dam: Discrepancy
  alignment metric for face recognition,'' in \emph{Proceedings of the IEEE/CVF
  International Conference on Computer Vision (ICCV)}, October 2021, pp.
  3814--3823.

\bibitem{Shen_2021_CVPR}
S.~Shen, W.~Li, Z.~Zhu, G.~Huang, D.~Du, J.~Lu, and J.~Zhou, ``Structure-aware
  face clustering on a large-scale graph with 107 nodes,'' in \emph{Proceedings
  of the IEEE/CVF Conference on Computer Vision and Pattern Recognition
  (CVPR)}, June 2021, pp. 9085--9094.

\bibitem{Nguyen_2021_CVPR}
X.-B. Nguyen, D.~T. Bui, C.~N. Duong, T.~D. Bui, and K.~Luu, ``Clusformer: A
  transformer based clustering approach to unsupervised large-scale face and
  visual landmark recognition,'' in \emph{Proceedings of the IEEE/CVF
  Conference on Computer Vision and Pattern Recognition (CVPR)}, June 2021, pp.
  10\,847--10\,856.

\bibitem{yang2019learning}
L.~Yang, X.~Zhan, D.~Chen, J.~Yan, C.~C. Loy, and D.~Lin, ``Learning to cluster
  faces on an affinity graph,'' in \emph{Proceedings of the IEEE Conference on
  Computer Vision and Pattern Recognition (CVPR)}, 2019.

\bibitem{yang2020learning}
L.~Yang, D.~Chen, X.~Zhan, R.~Zhao, C.~C. Loy, and D.~Lin, ``Learning to
  cluster faces via confidence and connectivity estimation,'' in
  \emph{Proceedings of the IEEE Conference on Computer Vision and Pattern
  Recognition}, 2020.

\bibitem{Wang_2019_ICCV}
M.~Wang, W.~Deng, J.~Hu, X.~Tao, and Y.~Huang, ``Racial faces in the wild:
  Reducing racial bias by information maximization adaptation network,'' in
  \emph{The IEEE International Conference on Computer Vision (ICCV)}, October
  2019.

\bibitem{radford2018improving}
A.~Radford, K.~Narasimhan, T.~Salimans, and I.~Sutskever, ``Improving language
  understanding by generative pre-training,'' 2018.

\bibitem{devlin2018bert}
J.~Devlin, M.-W. Chang, K.~Lee, and K.~Toutanova, ``Bert: Pre-training of deep
  bidirectional transformers for language understanding,'' \emph{arXiv preprint
  arXiv:1810.04805}, 2018.

\bibitem{dai2019transformer}
Z.~Dai, Z.~Yang, Y.~Yang, J.~Carbonell, Q.~V. Le, and R.~Salakhutdinov,
  ``Transformer-xl: Attentive language models beyond a fixed-length context,''
  \emph{arXiv preprint arXiv:1901.02860}, 2019.

\bibitem{yang2019xlnet}
Z.~Yang, Z.~Dai, Y.~Yang, J.~Carbonell, R.~R. Salakhutdinov, and Q.~V. Le,
  ``Xlnet: Generalized autoregressive pretraining for language understanding,''
  in \emph{Advances in neural information processing systems}, 2019, pp.
  5753--5763.

\bibitem{Yue_2021_ICCV}
X.~Yue, S.~Sun, Z.~Kuang, M.~Wei, P.~H. Torr, W.~Zhang, and D.~Lin, ``Vision
  transformer with progressive sampling,'' in \emph{Proceedings of the IEEE/CVF
  International Conference on Computer Vision (ICCV)}, October 2021, pp.
  387--396.

\bibitem{Arnab_2021_ICCV}
A.~Arnab, M.~Dehghani, G.~Heigold, C.~Sun, M.~Lu\v{c}i\'c, and C.~Schmid,
  ``Vivit: A video vision transformer,'' in \emph{Proceedings of the IEEE/CVF
  International Conference on Computer Vision (ICCV)}, October 2021, pp.
  6836--6846.

\bibitem{Chen_2021_ICCV}
C.-F.~R. Chen, Q.~Fan, and R.~Panda, ``Crossvit: Cross-attention multi-scale
  vision transformer for image classification,'' in \emph{Proceedings of the
  IEEE/CVF International Conference on Computer Vision (ICCV)}, October 2021,
  pp. 357--366.

\bibitem{Graham_2021_ICCV}
B.~Graham, A.~El-Nouby, H.~Touvron, P.~Stock, A.~Joulin, H.~J\'egou, and
  M.~Douze, ``Levit: A vision transformer in convnet's clothing for faster
  inference,'' in \emph{Proceedings of the IEEE/CVF International Conference on
  Computer Vision (ICCV)}, October 2021, pp. 12\,259--12\,269.

\bibitem{Zou_2021_CVPR}
C.~Zou, B.~Wang, Y.~Hu, J.~Liu, Q.~Wu, Y.~Zhao, B.~Li, C.~Zhang, C.~Zhang,
  Y.~Wei, and J.~Sun, ``End-to-end human object interaction detection with hoi
  transformer,'' in \emph{Proceedings of the IEEE/CVF Conference on Computer
  Vision and Pattern Recognition (CVPR)}, June 2021, pp. 11\,825--11\,834.

\bibitem{Wang_2021_CVPR}
N.~Wang, W.~Zhou, J.~Wang, and H.~Li, ``Transformer meets tracker: Exploiting
  temporal context for robust visual tracking,'' in \emph{Proceedings of the
  IEEE/CVF Conference on Computer Vision and Pattern Recognition (CVPR)}, June
  2021, pp. 1571--1580.

\bibitem{Chen_2021_CVPR}
H.~Chen, Y.~Wang, T.~Guo, C.~Xu, Y.~Deng, Z.~Liu, S.~Ma, C.~Xu, C.~Xu, and
  W.~Gao, ``Pre-trained image processing transformer,'' in \emph{Proceedings of
  the IEEE/CVF Conference on Computer Vision and Pattern Recognition (CVPR)},
  June 2021, pp. 12\,299--12\,310.

\bibitem{nguyen2023micron}
X.-B. Nguyen, C.~N. Duong, X.~Li, S.~Gauch, H.-S. Seo, and K.~Luu,
  ``Micron-bert: Bert-based facial micro-expression recognition,'' \emph{arXiv
  preprint arXiv:2304.03195}, 2023.

\bibitem{truong2022conda}
T.-D. Truong, P.~Helton, A.~Moustafa, J.~D. Cothren, and K.~Luu, ``Conda:
  Continual unsupervised domain adaptation learning in visual perception for
  self-driving cars,'' \emph{arXiv preprint arXiv:2212.00621}, 2022.

\bibitem{truong2023fredom}
T.-D. Truong, N.~Le, B.~Raj, J.~Cothren, and K.~Luu, ``Fredom: Fairness domain
  adaptation approach to semantic scene understanding,'' \emph{arXiv preprint
  arXiv:2304.02135}, 2023.

\bibitem{phacounting}
P.~Nguyen, T.-D. Truong, M.~Huang, Y.~Liang, N.~Le, and K.~Luu,
  ``Self-supervised domain adaptation in crowd counting,'' in \emph{2022 IEEE
  International Conference on Image Processing (ICIP)}, 2022, pp. 2786--2790.

\bibitem{nguyeniccv}
X.-B. Nguyen, G.-S. Lee, S.-H. Kim, and H.-J. Yang, ``Audio-video based emotion
  recognition using minimum cost flow algorithm,'' in \emph{2019 IEEE/CVF
  International Conference on Computer Vision Workshop (ICCVW)}, 2019, pp.
  3737--3741.

\bibitem{nguyenssl}
X.-B. Nguyen, G.~S. Lee, S.~H. Kim, and H.~J. Yang, ``Self-supervised learning
  based on spatial awareness for medical image analysis,'' \emph{IEEE Access},
  vol.~8, pp. 162\,973--162\,981, 2020.

\bibitem{omhc}
Y.~Chen, H.~Zhong, C.~Chen, C.~Shen, J.~Huang, T.~Wang, Y.~Liang, and Q.~Sun,
  ``On mitigating hard clusters for face clustering,'' in \emph{European
  Conference on Computer Vision}.\hskip 1em plus 0.5em minus 0.4em\relax
  Springer, 2022, pp. 529--544.

\bibitem{Xu_2021_CVPR}
X.~Xu, Y.~Huang, P.~Shen, S.~Li, J.~Li, F.~Huang, Y.~Li, and Z.~Cui,
  ``Consistent instance false positive improves fairness in face recognition,''
  in \emph{Proceedings of the IEEE/CVF Conference on Computer Vision and
  Pattern Recognition (CVPR)}, June 2021, pp. 578--586.

\bibitem{buolamwini2018gender}
J.~Buolamwini and T.~Gebru, ``Gender shades: Intersectional accuracy
  disparities in commercial gender classification,'' in \emph{Conference on
  fairness, accountability and transparency}.\hskip 1em plus 0.5em minus
  0.4em\relax PMLR, 2018, pp. 77--91.

\bibitem{chen2021understanding}
Y.~Chen and J.~Joo, ``Understanding and mitigating annotation bias in facial
  expression recognition,'' in \emph{Proceedings of the IEEE/CVF International
  Conference on Computer Vision}, 2021, pp. 14\,980--14\,991.

\bibitem{garcia2019harms}
R.~V. Garcia, L.~Wandzik, L.~Grabner, and J.~Krueger, ``The harms of
  demographic bias in deep face recognition research,'' in \emph{2019
  International Conference on Biometrics (ICB)}.\hskip 1em plus 0.5em minus
  0.4em\relax IEEE, 2019, pp. 1--6.

\bibitem{hazirbas2021towards}
C.~Hazirbas, J.~Bitton, B.~Dolhansky, J.~Pan, A.~Gordo, and C.~C. Ferrer,
  ``Towards measuring fairness in ai: the casual conversations dataset,''
  \emph{IEEE Transactions on Biometrics, Behavior, and Identity Science}, 2021.

\bibitem{jung2022learning}
S.~Jung, S.~Chun, and T.~Moon, ``Learning fair classifiers with partially
  annotated group labels,'' in \emph{Proceedings of the IEEE/CVF Conference on
  Computer Vision and Pattern Recognition}, 2022, pp. 10\,348--10\,357.

\bibitem{li2019repair}
Y.~Li and N.~Vasconcelos, ``Repair: Removing representation bias by dataset
  resampling,'' in \emph{Proceedings of the IEEE/CVF conference on computer
  vision and pattern recognition}, 2019, pp. 9572--9581.

\bibitem{misra2016seeing}
I.~Misra, C.~Lawrence~Zitnick, M.~Mitchell, and R.~Girshick, ``Seeing through
  the human reporting bias: Visual classifiers from noisy human-centric
  labels,'' in \emph{Proceedings of the IEEE conference on computer vision and
  pattern recognition}, 2016, pp. 2930--2939.

\bibitem{terhorst2020face}
P.~Terh{\"o}rst, J.~N. Kolf, N.~Damer, F.~Kirchbuchner, and A.~Kuijper, ``Face
  quality estimation and its correlation to demographic and non-demographic
  bias in face recognition,'' in \emph{2020 IEEE International Joint Conference
  on Biometrics (IJCB)}.\hskip 1em plus 0.5em minus 0.4em\relax IEEE, 2020, pp.
  1--11.

\bibitem{wang2022measuring}
A.~Wang, S.~Barocas, K.~Laird, and H.~Wallach, ``Measuring representational
  harms in image captioning,'' in \emph{2022 ACM Conference on Fairness,
  Accountability, and Transparency}, 2022, pp. 324--335.

\bibitem{wang2019balanced}
T.~Wang, J.~Zhao, M.~Yatskar, K.-W. Chang, and V.~Ordonez, ``Balanced datasets
  are not enough: Estimating and mitigating gender bias in deep image
  representations,'' in \emph{Proceedings of the IEEE/CVF International
  Conference on Computer Vision}, 2019, pp. 5310--5319.

\bibitem{yang2022explaining}
Y.~Yang, S.~Kim, and J.~Joo, ``Explaining deep convolutional neural networks
  via latent visual-semantic filter attention,'' in \emph{Proceedings of the
  IEEE/CVF Conference on Computer Vision and Pattern Recognition}, 2022, pp.
  8333--8343.

\bibitem{merler2019diversity}
M.~Merler, N.~Ratha, R.~S. Feris, and J.~R. Smith, ``Diversity in faces,''
  \emph{arXiv preprint arXiv:1901.10436}, 2019.

\bibitem{wang2021meta}
M.~Wang, Y.~Zhang, and W.~Deng, ``Meta balanced network for fair face
  recognition,'' \emph{IEEE Transactions on Pattern Analysis and Machine
  Intelligence}, 2021.

\bibitem{deep_fair_clustering}
P.~Li, H.~Zhao, and H.~Liu, ``Deep fair clustering for visual learning,'' in
  \emph{Proceedings of the IEEE/CVF Conference on Computer Vision and Pattern
  Recognition (CVPR)}, June 2020.

\bibitem{deep_clustering_based_fair_outlier_detection}
H.~Song, P.~Li, and H.~Liu, ``Deep clustering based fair outlier detection,''
  in \emph{Proceedings of the 27th ACM SIGKDD Conference on Knowledge Discovery
  \& Data Mining}, 2021, pp. 1481--1489.

\bibitem{wang2019skewness}
M.~Wang and W.~Deng, ``Mitigate bias in face recognition using skewness-aware
  reinforcement learning,'' \emph{arXiv preprint arXiv:1911.10692}, 2019.

\bibitem{deng2019arcface}
J.~Deng, J.~Guo, N.~Xue, and S.~Zafeiriou, ``Arcface: Additive angular margin
  loss for deep face recognition,'' in \emph{Proceedings of the IEEE Conference
  on Computer Vision and Pattern Recognition}, 2019, pp. 4690--4699.

\bibitem{kingma2014adam}
D.~P. Kingma and J.~Ba, ``Adam: A method for stochastic optimization,''
  \emph{arXiv preprint arXiv:1412.6980}, 2014.

\bibitem{loshchilov2016sgdr}
I.~Loshchilov and F.~Hutter, ``Sgdr: Stochastic gradient descent with warm
  restarts,'' \emph{arXiv preprint arXiv:1608.03983}, 2016.

\bibitem{karkkainenfairface}
K.~Karkkainen and J.~Joo, ``Fairface: Face attribute dataset for balanced race,
  gender, and age for bias measurement and mitigation,'' in \emph{Proceedings
  of the IEEE/CVF Winter Conference on Applications of Computer Vision}, 2021,
  pp. 1548--1558.

\end{thebibliography}



\begin{thebibliography}{69}
\ifx \bisbn   \undefined \def \bisbn  #1{ISBN #1}\fi
\ifx \binits  \undefined \def \binits#1{#1}\fi
\ifx \bauthor  \undefined \def \bauthor#1{#1}\fi
\ifx \batitle  \undefined \def \batitle#1{#1}\fi
\ifx \bjtitle  \undefined \def \bjtitle#1{#1}\fi
\ifx \bvolume  \undefined \def \bvolume#1{\textbf{#1}}\fi
\ifx \byear  \undefined \def \byear#1{#1}\fi
\ifx \bissue  \undefined \def \bissue#1{#1}\fi
\ifx \bfpage  \undefined \def \bfpage#1{#1}\fi
\ifx \blpage  \undefined \def \blpage #1{#1}\fi
\ifx \burl  \undefined \def \burl#1{\textsf{#1}}\fi
\ifx \doiurl  \undefined \def \doiurl#1{\url{https://doi.org/#1}}\fi
\ifx \betal  \undefined \def \betal{\textit{et al.}}\fi
\ifx \binstitute  \undefined \def \binstitute#1{#1}\fi
\ifx \binstitutionaled  \undefined \def \binstitutionaled#1{#1}\fi
\ifx \bctitle  \undefined \def \bctitle#1{#1}\fi
\ifx \beditor  \undefined \def \beditor#1{#1}\fi
\ifx \bpublisher  \undefined \def \bpublisher#1{#1}\fi
\ifx \bbtitle  \undefined \def \bbtitle#1{#1}\fi
\ifx \bedition  \undefined \def \bedition#1{#1}\fi
\ifx \bseriesno  \undefined \def \bseriesno#1{#1}\fi
\ifx \blocation  \undefined \def \blocation#1{#1}\fi
\ifx \bsertitle  \undefined \def \bsertitle#1{#1}\fi
\ifx \bsnm \undefined \def \bsnm#1{#1}\fi
\ifx \bsuffix \undefined \def \bsuffix#1{#1}\fi
\ifx \bparticle \undefined \def \bparticle#1{#1}\fi
\ifx \barticle \undefined \def \barticle#1{#1}\fi
\bibcommenthead
\ifx \bconfdate \undefined \def \bconfdate #1{#1}\fi
\ifx \botherref \undefined \def \botherref #1{#1}\fi
\ifx \url \undefined \def \url#1{\textsf{#1}}\fi
\ifx \bchapter \undefined \def \bchapter#1{#1}\fi
\ifx \bbook \undefined \def \bbook#1{#1}\fi
\ifx \bcomment \undefined \def \bcomment#1{#1}\fi
\ifx \oauthor \undefined \def \oauthor#1{#1}\fi
\ifx \citeauthoryear \undefined \def \citeauthoryear#1{#1}\fi
\ifx \endbibitem  \undefined \def \endbibitem {}\fi
\ifx \bconflocation  \undefined \def \bconflocation#1{#1}\fi
\ifx \arxivurl  \undefined \def \arxivurl#1{\textsf{#1}}\fi
\csname PreBibitemsHook\endcsname

\bibitem{lin2018bsn}
\begin{bchapter}
\bauthor{\bsnm{Lin}, \binits{T.}},
\bauthor{\bsnm{Zhao}, \binits{X.}},
\bauthor{\bsnm{Su}, \binits{H.}},
\bauthor{\bsnm{Wang}, \binits{C.}},
\bauthor{\bsnm{Yang}, \binits{M.}}:
\bctitle{Bsn: Boundary sensitive network for temporal action proposal
  generation}.
In: \bbtitle{ECCV}
(\byear{2018})
\end{bchapter}
\endbibitem

\bibitem{BSN++}
\begin{bchapter}
\bauthor{\bsnm{Su}, \binits{H.}},
\bauthor{\bsnm{Gan}, \binits{W.}},
\bauthor{\bsnm{Wu}, \binits{W.}},
\bauthor{\bsnm{Yan}, \binits{J.}},
\bauthor{\bsnm{Qiao}, \binits{Y.}}:
\bctitle{{BSN++:} complementary boundary regressor with scale-balanced relation
  modeling for temporal action proposal generation}.
In: \bbtitle{ACCV}
(\byear{2020})
\end{bchapter}
\endbibitem

\bibitem{bmn}
\begin{bchapter}
\bauthor{\bsnm{Lin}, \binits{T.}},
\bauthor{\bsnm{Liu}, \binits{X.}},
\bauthor{\bsnm{Li}, \binits{X.}},
\bauthor{\bsnm{Ding}, \binits{E.}},
\bauthor{\bsnm{Wen}, \binits{S.}}:
\bctitle{Bmn: Boundary-matching network for temporal action proposal
  generation}.
In: \bbtitle{ICCV}
(\byear{2019})
\end{bchapter}
\endbibitem

\bibitem{dbg}
\begin{botherref}
\oauthor{\bsnm{Lin}, \binits{C.}},
\oauthor{\bsnm{Li}, \binits{J.}},
\oauthor{\bsnm{Wang}, \binits{Y.}},
\oauthor{\bsnm{Tai}, \binits{Y.}},
\oauthor{\bsnm{Luo}, \binits{D.}},
\oauthor{\bsnm{Cui}, \binits{Z.}},
\oauthor{\bsnm{Wang}, \binits{C.}},
\oauthor{\bsnm{Li}, \binits{J.}},
\oauthor{\bsnm{Huang}, \binits{F.}},
\oauthor{\bsnm{Ji}, \binits{R.}}:
Fast learning of temporal action proposal via dense boundary generator.
AAAI,
11499--11506
(2020)
\end{botherref}
\endbibitem

\bibitem{xu2020gtad}
\begin{bchapter}
\bauthor{\bsnm{Xu}, \binits{M.}},
\bauthor{\bsnm{Zhao}, \binits{C.}},
\bauthor{\bsnm{Rojas}, \binits{D.S.}},
\bauthor{\bsnm{Thabet}, \binits{A.}},
\bauthor{\bsnm{Ghanem}, \binits{B.}}:
\bctitle{G-tad: Sub-graph localization for temporal action detection}.
In: \bbtitle{CVPR}
(\byear{2020})
\end{bchapter}
\endbibitem

\bibitem{KhoaVo_Access}
\begin{barticle}
\bauthor{\bsnm{Vo}, \binits{K.}},
\bauthor{\bsnm{Yamazaki}, \binits{K.}},
\bauthor{\bsnm{Truong}, \binits{S.}},
\bauthor{\bsnm{Tran}, \binits{M.-T.}},
\bauthor{\bsnm{Sugimoto}, \binits{A.}},
\bauthor{\bsnm{Le}, \binits{N.}}:
\batitle{Abn: Agent-aware boundary networks for temporal action proposal
  generation}.
\bjtitle{IEEE Access}
\bvolume{9},
\bfpage{126431}--\blpage{126445}
(\byear{2021})
\end{barticle}
\endbibitem

\bibitem{KhoaVo_ICASSP}
\begin{bchapter}
\bauthor{\bsnm{Vo-Ho}, \binits{V.-K.}},
\bauthor{\bsnm{Le}, \binits{N.}},
\bauthor{\bsnm{Kamazaki}, \binits{K.}},
\bauthor{\bsnm{Sugimoto}, \binits{A.}},
\bauthor{\bsnm{Tran}, \binits{M.-T.}}:
\bctitle{Agent-environment network for temporal action proposal generation}.
In: \bbtitle{ICASSP},
pp. \bfpage{2160}--\blpage{2164}
(\byear{2021})
\end{bchapter}
\endbibitem

\bibitem{anchor_2}
\begin{bchapter}
\bauthor{\bsnm{Shou}, \binits{Z.}},
\bauthor{\bsnm{Wang}, \binits{D.}},
\bauthor{\bsnm{Chang}, \binits{S.-F.}}:
\bctitle{Temporal action localization in untrimmed videos via multi-stage
  cnns}.
In: \bbtitle{CVPR}
(\byear{2016})
\end{bchapter}
\endbibitem

\bibitem{Jiyang2017}
\begin{botherref}
\oauthor{\bsnm{{Gao}}, \binits{J.}},
\oauthor{\bsnm{{Yang}}, \binits{Z.}},
\oauthor{\bsnm{{Nevatia}}, \binits{R.}}:
{Cascaded Boundary Regression for Temporal Action Detection}.
arXiv e-prints,
1705--01180
(2017)
{\href{https://arxiv.org/abs/1705.01180}{{arXiv:1705.01180}}}
{[cs.CV]}
\end{botherref}
\endbibitem

\bibitem{CTAP}
\begin{bchapter}
\bauthor{\bsnm{Gao}, \binits{J.}},
\bauthor{\bsnm{Chen}, \binits{K.}},
\bauthor{\bsnm{Nevatia}, \binits{R.}}:
\bctitle{Ctap: Complementary temporal action proposal generation}.
In: \bbtitle{ECCV}
(\byear{2018})
\end{bchapter}
\endbibitem

\bibitem{Gao_2018_CVPR}
\begin{bchapter}
\bauthor{\bsnm{Gao}, \binits{J.}},
\bauthor{\bsnm{Ge}, \binits{R.}},
\bauthor{\bsnm{Chen}, \binits{K.}},
\bauthor{\bsnm{Nevatia}, \binits{R.}}:
\bctitle{Motion-appearance co-memory networks for video question answering}.
In: \bbtitle{CVPR}
(\byear{2018})
\end{bchapter}
\endbibitem

\bibitem{caba2015activitynet}
\begin{bchapter}
\bauthor{\bsnm{Fabian Caba~Heilbron}, \binits{B.G.} \bsuffix{Victor~Escorcia}},
\bauthor{\bsnm{Niebles}, \binits{J.C.}}:
\bctitle{Activitynet: A large-scale video benchmark for human activity
  understanding}.
In: \bbtitle{CVPR},
pp. \bfpage{961}--\blpage{970}
(\byear{2015})
\end{bchapter}
\endbibitem

\bibitem{THUMOS14}
\begin{botherref}
\oauthor{\bsnm{Jiang}, \binits{Y.-G.}},
\oauthor{\bsnm{Liu}, \binits{J.}},
\oauthor{\bsnm{Roshan~Zamir}, \binits{A.}},
\oauthor{\bsnm{Toderici}, \binits{G.}},
\oauthor{\bsnm{Laptev}, \binits{I.}},
\oauthor{\bsnm{Shah}, \binits{M.}},
\oauthor{\bsnm{Sukthankar}, \binits{R.}}:
{THUMOS} Challenge: Action Recognition with a Large Number of Classes.
\url{http://crcv.ucf.edu/THUMOS14/}
(2014)
\end{botherref}
\endbibitem

\bibitem{krishna2017dense}
\begin{bchapter}
\bauthor{\bsnm{Krishna}, \binits{R.}},
\bauthor{\bsnm{Hata}, \binits{K.}},
\bauthor{\bsnm{Ren}, \binits{F.}},
\bauthor{\bsnm{Fei-Fei}, \binits{L.}},
\bauthor{\bsnm{Carlos~Niebles}, \binits{J.}}:
\bctitle{Dense-captioning events in videos}.
In: \bbtitle{ICCV},
pp. \bfpage{706}--\blpage{715}
(\byear{2017})
\end{bchapter}
\endbibitem

\bibitem{Kinetics}
\begin{botherref}
\oauthor{\bsnm{Kay}, \binits{W.}},
\oauthor{\bsnm{Carreira}, \binits{J.}},
\oauthor{}, et al.:
The kinetics human action video dataset.
arXiv preprint arXiv:1705.06950
(2017)
\end{botherref}
\endbibitem

\bibitem{actionproposal_2016}
\begin{bchapter}
\bauthor{\bsnm{Richard}, \binits{A.}},
\bauthor{\bsnm{Gall}, \binits{J.}}:
\bctitle{Temporal action detection using a statistical language model}.
In: \bbtitle{CVPR}
(\byear{2016})
\end{bchapter}
\endbibitem

\bibitem{FasterR_CNN_Action}
\begin{bchapter}
\bauthor{\bsnm{{Chao}}, \binits{Y.}},
\bauthor{\bsnm{{Vijayanarasimhan}}, \binits{S.}},
\bauthor{\bsnm{{Seybold}}, \binits{B.}},
\bauthor{\bsnm{{Ross}}, \binits{D.A.}},
\bauthor{\bsnm{{Deng}}, \binits{J.}},
\bauthor{\bsnm{{Sukthankar}}, \binits{R.}}:
\bctitle{Rethinking the faster r-cnn architecture for temporal action
  localization}.
In: \bbtitle{CVPR},
pp. \bfpage{1130}--\blpage{1139}
(\byear{2018})
\end{bchapter}
\endbibitem

\bibitem{anchor_1}
\begin{bchapter}
\bauthor{\bsnm{Heilbron}, \binits{F.C.}},
\bauthor{\bsnm{Niebles}, \binits{J.C.}},
\bauthor{\bsnm{Ghanem}, \binits{B.}}:
\bctitle{Fast temporal activity proposals for efficient detection of human
  actions in untrimmed videos}.
In: \bbtitle{CVPR}
(\byear{2016})
\end{bchapter}
\endbibitem

\bibitem{anchor_3}
\begin{bchapter}
\bauthor{\bsnm{Gao}, \binits{J.}},
\bauthor{\bsnm{Yang}, \binits{Z.}},
\bauthor{\bsnm{Chen}, \binits{K.}},
\bauthor{\bsnm{Sun}, \binits{C.}},
\bauthor{\bsnm{Nevatia}, \binits{R.}}:
\bctitle{Turn tap: Temporal unit regression network for temporal action
  proposals}.
In: \bbtitle{ICCV}
(\byear{2017})
\end{bchapter}
\endbibitem

\bibitem{C3D}
\begin{barticle}
\bauthor{\bsnm{{Ji}}, \binits{S.}},
\bauthor{\bsnm{{Xu}}, \binits{W.}},
\bauthor{\bsnm{{Yang}}, \binits{M.}},
\bauthor{\bsnm{{Yu}}, \binits{K.}}:
\batitle{3d convolutional neural networks for human action recognition}.
\bjtitle{IEEE TPAMI}
\bvolume{35}(\bissue{1}),
\bfpage{221}--\blpage{231}
(\byear{2013})
\end{barticle}
\endbibitem

\bibitem{i3d_2017}
\begin{bchapter}
\bauthor{\bsnm{Carreira}, \binits{J.}},
\bauthor{\bsnm{Zisserman}, \binits{A.}}:
\bctitle{Quo vadis, action recognition? a new model and the kinetics dataset}.
In: \bbtitle{CVPR},
pp. \bfpage{6299}--\blpage{6308}
(\byear{2017})
\end{bchapter}
\endbibitem

\bibitem{2_stream_1}
\begin{bchapter}
\bauthor{\bsnm{Simonyan}, \binits{K.}},
\bauthor{\bsnm{Zisserman}, \binits{A.}}:
\bctitle{Two-stream convolutional networks for action recognition in videos}.
In: \bbtitle{NIPS}.
\bsertitle{NIPS'14},
pp. \bfpage{568}--\blpage{576}.
\bpublisher{MIT Press},
\blocation{Cambridge, MA, USA}
(\byear{2014})
\end{bchapter}
\endbibitem

\bibitem{SlowFast}
\begin{bchapter}
\bauthor{\bsnm{Feichtenhofer}, \binits{C.}},
\bauthor{\bsnm{Fan}, \binits{H.}},
\bauthor{\bsnm{Malik}, \binits{J.}},
\bauthor{\bsnm{He}, \binits{K.}}:
\bctitle{Slowfast networks for video recognition}.
In: \bbtitle{ICCV}
(\byear{2019})
\end{bchapter}
\endbibitem

\bibitem{mei2020vision}
\begin{botherref}
\oauthor{\bsnm{Mei}, \binits{T.}},
\oauthor{\bsnm{Zhang}, \binits{W.}},
\oauthor{\bsnm{Yao}, \binits{T.}}:
Vision and language: from visual perception to content creation.
APSIPA TSIP
\textbf{9}
(2020)
\end{botherref}
\endbibitem

\bibitem{anderson2018bottom}
\begin{bchapter}
\bauthor{\bsnm{Anderson}, \binits{P.}},
\bauthor{\bsnm{He}, \binits{X.}},
\bauthor{\bsnm{Buehler}, \binits{C.}},
\bauthor{\bsnm{Teney}, \binits{D.}},
\bauthor{\bsnm{Johnson}, \binits{M.}},
\bauthor{\bsnm{Gould}, \binits{S.}},
\bauthor{\bsnm{Zhang}, \binits{L.}}:
\bctitle{Bottom-up and top-down attention for image captioning and visual
  question answering}.
In: \bbtitle{CVPR},
pp. \bfpage{6077}--\blpage{6086}
(\byear{2018})
\end{bchapter}
\endbibitem

\bibitem{radford2021learning}
\begin{botherref}
\oauthor{\bsnm{Radford}, \binits{A.}},
\oauthor{\bsnm{Kim}, \binits{J.W.}}, et al.:
Learning transferable visual models from natural language supervision.
arXiv preprint arXiv:2103.00020
(2021)
\end{botherref}
\endbibitem

\bibitem{heilbron2016fast}
\begin{bchapter}
\bauthor{\bsnm{Heilbron}, \binits{F.C.}},
\bauthor{\bsnm{Niebles}, \binits{J.C.}},
\bauthor{\bsnm{Ghanem}, \binits{B.}}:
\bctitle{Fast temporal activity proposals for efficient detection of human
  actions in untrimmed videos}.
In: \bbtitle{CVPR},
pp. \bfpage{1914}--\blpage{1923}
(\byear{2016})
\end{bchapter}
\endbibitem

\bibitem{FasterRCNN}
\begin{bchapter}
\bauthor{\bsnm{Ren}, \binits{S.}},
\bauthor{\bsnm{He}, \binits{K.}},
\bauthor{\bsnm{Girshick}, \binits{R.}},
\bauthor{\bsnm{Sun}, \binits{J.}}:
\bctitle{Faster r-cnn: Towards real-time object detection with region proposal
  networks}.
In: \bbtitle{NeurIPS},
pp. \bfpage{91}--\blpage{99}
(\byear{2015})
\end{bchapter}
\endbibitem

\bibitem{RetinaNet}
\begin{bchapter}
\bauthor{\bsnm{{Lin}}, \binits{T.}},
\bauthor{\bsnm{{Goyal}}, \binits{P.}},
\bauthor{\bsnm{{Girshick}}, \binits{R.}},
\bauthor{\bsnm{{He}}, \binits{K.}},
\bauthor{\bsnm{{Dollár}}, \binits{P.}}:
\bctitle{Focal loss for dense object detection}.
In: \bbtitle{ICCV},
pp. \bfpage{2999}--\blpage{3007}
(\byear{2017})
\end{bchapter}
\endbibitem

\bibitem{yolov3}
\begin{botherref}
\oauthor{\bsnm{Redmon}, \binits{J.}},
\oauthor{\bsnm{Farhadi}, \binits{A.}}:
Yolov3: An incremental improvement.
arXiv
(2018)
\end{botherref}
\endbibitem

\bibitem{C3D_3}
\begin{bchapter}
\bauthor{\bsnm{Tran}, \binits{D.}},
\bauthor{\bsnm{Bourdev}, \binits{L.}},
\bauthor{\bsnm{Fergus}, \binits{R.}},
\bauthor{\bsnm{Torresani}, \binits{L.}},
\bauthor{\bsnm{Paluri}, \binits{M.}}:
\bctitle{Learning spatiotemporal features with 3d convolutional networks}.
In: \bbtitle{ICCV},
pp. \bfpage{4489}--\blpage{4497}
(\byear{2015})
\end{bchapter}
\endbibitem

\bibitem{boundary_0}
\begin{bchapter}
\bauthor{\bsnm{Zhao}, \binits{Y.}},
\bauthor{\bsnm{Xiong}, \binits{Y.}},
\bauthor{\bsnm{Wang}, \binits{L.}},
\bauthor{\bsnm{Wu}, \binits{Z.}},
\bauthor{\bsnm{Tang}, \binits{X.}},
\bauthor{\bsnm{Lin}, \binits{D.}}:
\bctitle{Temporal action detection with structured segment networks}.
In: \bbtitle{ICCV}
(\byear{2017})
\end{bchapter}
\endbibitem

\bibitem{liu2019multi}
\begin{bchapter}
\bauthor{\bsnm{Liu}, \binits{Y.}},
\bauthor{\bsnm{Ma}, \binits{L.}},
\bauthor{\bsnm{Zhang}, \binits{Y.}},
\bauthor{\bsnm{Liu}, \binits{W.}},
\bauthor{\bsnm{Chang}, \binits{S.-F.}}:
\bctitle{Multi-granularity generator for temporal action proposal}.
In: \bbtitle{CVPR}
(\byear{2019})
\end{bchapter}
\endbibitem

\bibitem{itti1998model}
\begin{barticle}
\bauthor{\bsnm{Itti}, \binits{L.}},
\bauthor{\bsnm{Koch}, \binits{C.}},
\bauthor{\bsnm{Niebur}, \binits{E.}}:
\batitle{A model of saliency-based visual attention for rapid scene analysis}.
\bjtitle{IEEE TPAMI}
\bvolume{20}(\bissue{11}),
\bfpage{1254}--\blpage{1259}
(\byear{1998})
\end{barticle}
\endbibitem

\bibitem{chaudhari2021attentive}
\begin{barticle}
\bauthor{\bsnm{Chaudhari}, \binits{S.}},
\bauthor{\bsnm{Mithal}, \binits{V.}},
\bauthor{\bsnm{Polatkan}, \binits{G.}},
\bauthor{\bsnm{Ramanath}, \binits{R.}}:
\batitle{An attentive survey of attention models}.
\bjtitle{TIST}
\bvolume{12}(\bissue{5}),
\bfpage{1}--\blpage{32}
(\byear{2021})
\end{barticle}
\endbibitem

\bibitem{bahdanau2014neural}
\begin{botherref}
\oauthor{\bsnm{Bahdanau}, \binits{D.}},
\oauthor{\bsnm{Cho}, \binits{K.}},
\oauthor{\bsnm{Bengio}, \binits{Y.}}:
Neural machine translation by jointly learning to align and translate.
arXiv preprint arXiv:1409.0473
(2014)
\end{botherref}
\endbibitem

\bibitem{cho2015describing}
\begin{barticle}
\bauthor{\bsnm{Cho}, \binits{K.}},
\bauthor{\bsnm{Courville}, \binits{A.}},
\bauthor{\bsnm{Bengio}, \binits{Y.}}:
\batitle{Describing multimedia content using attention-based encoder-decoder
  networks}.
\bjtitle{IEEE Transactions on Multimedia}
\bvolume{17}(\bissue{11}),
\bfpage{1875}--\blpage{1886}
(\byear{2015})
\end{barticle}
\endbibitem

\bibitem{galassi2020attention}
\begin{botherref}
\oauthor{\bsnm{Galassi}, \binits{A.}},
\oauthor{\bsnm{Lippi}, \binits{M.}},
\oauthor{\bsnm{Torroni}, \binits{P.}}:
Attention in natural language processing.
IEEE Transactions on Neural Networks and Learning Systems
(2020)
\end{botherref}
\endbibitem

\bibitem{chaudhari2019attentive}
\begin{botherref}
\oauthor{\bsnm{Chaudhari}, \binits{S.}},
\oauthor{\bsnm{Polatkan}, \binits{G.}},
\oauthor{\bsnm{Ramanath}, \binits{R.}},
\oauthor{\bsnm{Mithal}, \binits{V.}}:
An attentive survey of attention models.
arXiv preprint arXiv:1904.02874
(2019)
\end{botherref}
\endbibitem

\bibitem{attention_is_all_you_need}
\begin{bchapter}
\bauthor{\bsnm{Vaswani}, \binits{A.}},
\bauthor{\bsnm{Shazeer}, \binits{N.}},
\bauthor{\bsnm{Parmar}, \binits{N.}},
\bauthor{\bsnm{Uszkoreit}, \binits{J.}},
\bauthor{\bsnm{Jones}, \binits{L.}},
\bauthor{\bsnm{Gomez}, \binits{A.N.}},
\bauthor{\bsnm{Kaiser}, \binits{L.u.}},
\bauthor{\bsnm{Polosukhin}, \binits{I.}}:
\bctitle{Attention is all you need}.
In: \bbtitle{NeurIPS}.
\bpublisher{Curran Associates, Inc.}, \blocation{???}
(\byear{2017})
\end{bchapter}
\endbibitem

\bibitem{xu2015show}
\begin{bchapter}
\bauthor{\bsnm{Xu}, \binits{K.}},
\bauthor{\bsnm{Ba}, \binits{J.}},
\bauthor{\bsnm{Kiros}, \binits{R.}},
\bauthor{\bsnm{Cho}, \binits{K.}},
\bauthor{\bsnm{Courville}, \binits{A.}},
\bauthor{\bsnm{Salakhudinov}, \binits{R.}},
\bauthor{\bsnm{Zemel}, \binits{R.}},
\bauthor{\bsnm{Bengio}, \binits{Y.}}:
\bctitle{Show, attend and tell: Neural image caption generation with visual
  attention}.
In: \bbtitle{ICML},
pp. \bfpage{2048}--\blpage{2057}
(\byear{2015}).
\bcomment{PMLR}
\end{bchapter}
\endbibitem

\bibitem{Saccader_NIPS2019}
\begin{bchapter}
\bauthor{\bsnm{Elsayed}, \binits{G.}},
\bauthor{\bsnm{Kornblith}, \binits{S.}},
\bauthor{\bsnm{Le}, \binits{Q.V.}}:
\bctitle{Saccader: Improving accuracy of hard attention models for vision}.
In: \bbtitle{NeurIPS}
(\byear{2019})
\end{bchapter}
\endbibitem

\bibitem{patro2018differential}
\begin{bchapter}
\bauthor{\bsnm{Patro}, \binits{B.}},
\bauthor{\bsnm{Namboodiri}, \binits{V.P.}}:
\bctitle{Differential attention for visual question answering}.
In: \bbtitle{CVPR},
pp. \bfpage{7680}--\blpage{7688}
(\byear{2018})
\end{bchapter}
\endbibitem

\bibitem{gtan_cvpr2019}
\begin{bchapter}
\bauthor{\bsnm{Long}, \binits{F.}},
\bauthor{\bsnm{Yao}, \binits{T.}},
\bauthor{\bsnm{Qiu}, \binits{Z.}},
\bauthor{\bsnm{Tian}, \binits{X.}},
\bauthor{\bsnm{Luo}, \binits{J.}},
\bauthor{\bsnm{Mei}, \binits{T.}}:
\bctitle{Gaussian temporal awareness networks for action localization}.
In: \bbtitle{CVPR},
pp. \bfpage{344}--\blpage{353}
(\byear{2019})
\end{bchapter}
\endbibitem

\bibitem{tsi_accv}
\begin{bchapter}
\bauthor{\bsnm{Liu}, \binits{S.}},
\bauthor{\bsnm{Zhao}, \binits{X.}},
\bauthor{\bsnm{Su}, \binits{H.}},
\bauthor{\bsnm{Hu}, \binits{Z.}}:
\bctitle{Tsi: Temporal scale invariant network for action proposal generation}.
In: \bbtitle{ACCV}
(\byear{2020})
\end{bchapter}
\endbibitem

\bibitem{bai2020boundary}
\begin{bchapter}
\bauthor{\bsnm{Bai}, \binits{Y.}},
\bauthor{\bsnm{Wang}, \binits{Y.}},
\bauthor{\bsnm{Tong}, \binits{Y.}},
\bauthor{\bsnm{Yang}, \binits{Y.}},
\bauthor{\bsnm{Liu}, \binits{Q.}},
\bauthor{\bsnm{Liu}, \binits{J.}}:
\bctitle{Boundary content graph neural network for temporal action proposal
  generation}.
In: \bbtitle{ECCV},
pp. \bfpage{121}--\blpage{137}
(\byear{2020}).
\bcomment{Springer}
\end{bchapter}
\endbibitem

\bibitem{tan2021relaxed}
\begin{botherref}
\oauthor{\bsnm{Tan}, \binits{J.}},
\oauthor{\bsnm{Tang}, \binits{J.}},
\oauthor{\bsnm{Wang}, \binits{L.}},
\oauthor{\bsnm{Wu}, \binits{G.}}:
Relaxed transformer decoders for direct action proposal generation.
ICCV
(2021)
\end{botherref}
\endbibitem

\bibitem{MaskRCNN_ICCV17}
\begin{bchapter}
\bauthor{\bsnm{He}, \binits{K.}},
\bauthor{\bsnm{Gkioxari}, \binits{G.}},
\bauthor{\bsnm{Dollar}, \binits{P.}},
\bauthor{\bsnm{Girshick}, \binits{R.}}:
\bctitle{Mask r-cnn}.
In: \bbtitle{ICCV}
(\byear{2017})
\end{bchapter}
\endbibitem

\bibitem{sennrich-etal-2016-neural}
\begin{bchapter}
\bauthor{\bsnm{Sennrich}, \binits{R.}},
\bauthor{\bsnm{Haddow}, \binits{B.}},
\bauthor{\bsnm{Birch}, \binits{A.}}:
\bctitle{Neural machine translation of rare words with subword units}.
In: \bbtitle{Proceedings of the 54th Annual Meeting of the Association for
  Computational Linguistics (Volume 1: Long Papers)},
pp. \bfpage{1715}--\blpage{1725}.
\bpublisher{Association for Computational Linguistics},
\blocation{Berlin, Germany}
(\byear{2016}).
\doiurl{10.18653/v1/P16-1162}.
\burl{https://aclanthology.org/P16-1162}
\end{bchapter}
\endbibitem

\bibitem{dosovitskiy2020image}
\begin{botherref}
\oauthor{\bsnm{Dosovitskiy}, \binits{A.}},
\oauthor{\bsnm{Beyer}, \binits{L.}}, et al.:
An image is worth 16x16 words: Transformers for image recognition at scale.
CVPR
(2021)
\end{botherref}
\endbibitem

\bibitem{adahard_eccv2018}
\begin{bchapter}
\bauthor{\bsnm{Malinowski}, \binits{M.}},
\bauthor{\bsnm{Doersch}, \binits{C.}},
\bauthor{\bsnm{Santoro}, \binits{A.}},
\bauthor{\bsnm{Battaglia}, \binits{P.}}:
\bctitle{Learning visual question answering by bootstrapping hard attention}.
In: \bbtitle{ECCV},
pp. \bfpage{3}--\blpage{20}
(\byear{2018})
\end{bchapter}
\endbibitem

\bibitem{khoavo_aei_bmvc21}
\begin{bchapter}
\bauthor{\bsnm{Vo}, \binits{K.}},
\bauthor{\bsnm{Joo}, \binits{H.}},
\bauthor{\bsnm{Yamazaki}, \binits{K.}},
\bauthor{\bsnm{Truong}, \binits{S.}},
\bauthor{\bsnm{Kitani}, \binits{K.}},
\bauthor{\bsnm{Tran}, \binits{M.-T.}},
\bauthor{\bsnm{Le}, \binits{N.}}:
\bctitle{Aei: Actors-environment interaction with adaptive attention for
  temporal action proposals generation}.
In: \bbtitle{32nd British Machine Vision Conference 2021, {BMVC} 2021, Virtual
  Event, UK, November 22-25, 2021}
(\byear{2021}).
\burl{https://www.bmvc2021-virtualconference.com/assets/papers/1095.pdf}
\end{bchapter}
\endbibitem

\bibitem{SoftNMS}
\begin{bchapter}
\bauthor{\bsnm{Bodla}, \binits{N.}},
\bauthor{\bsnm{Singh}, \binits{B.}},
\bauthor{\bsnm{Chellappa}, \binits{R.}},
\bauthor{\bsnm{Davis}, \binits{L.S.}}:
\bctitle{Soft-nms -- improving object detection with one line of code}.
In: \bbtitle{ICCV}
(\byear{2017})
\end{bchapter}
\endbibitem

\bibitem{NMS}
\begin{bchapter}
\bauthor{\bsnm{{Neubeck}}, \binits{A.}},
\bauthor{\bsnm{{Van Gool}}, \binits{L.}}:
\bctitle{Efficient non-maximum suppression}.
In: \bbtitle{ICPR},
vol. \bseriesno{3},
pp. \bfpage{850}--\blpage{855}
(\byear{2006})
\end{bchapter}
\endbibitem

\bibitem{damen2021rescaling}
\begin{botherref}
\oauthor{\bsnm{Damen}, \binits{D.}},
\oauthor{\bsnm{Doughty}, \binits{H.}}, et al.:
Rescaling egocentric vision: Collection, pipeline and challenges for
  epic-kitchens-100.
IJVC,
1--23
(2021)
\end{botherref}
\endbibitem

\bibitem{MR_eccv2020}
\begin{bchapter}
\bauthor{\bsnm{Zhao}, \binits{P.}},
\bauthor{\bsnm{Xie}, \binits{L.}},
\bauthor{\bsnm{Ju}, \binits{C.}},
\bauthor{\bsnm{Zhang}, \binits{Y.}},
\bauthor{\bsnm{Wang}, \binits{Y.}},
\bauthor{\bsnm{Tian}, \binits{Q.}}:
\bctitle{Bottom-up temporal action localization with mutual regularization}.
In: \bbtitle{ECCV},
pp. \bfpage{539}--\blpage{555}
(\byear{2020}).
\bcomment{Springer}
\end{bchapter}
\endbibitem

\bibitem{TCN}
\begin{bchapter}
\bauthor{\bsnm{Dai}, \binits{X.}},
\bauthor{\bsnm{Singh}, \binits{B.}},
\bauthor{\bsnm{Zhang}, \binits{G.}},
\bauthor{\bsnm{Davis}, \binits{L.S.}},
\bauthor{\bsnm{Qiu~Chen}, \binits{Y.}}:
\bctitle{Temporal context network for activity localization in videos}.
In: \bbtitle{ICCV}
(\byear{2017})
\end{bchapter}
\endbibitem

\bibitem{MSRA}
\begin{bchapter}
\bauthor{\bsnm{Yao}, \binits{T.}},
\bauthor{\bsnm{Li}, \binits{Y.}},
\bauthor{\bsnm{Qiu}, \binits{Z.}},
\bauthor{\bsnm{Long}, \binits{F.}},
\bauthor{\bsnm{Pan}, \binits{Y.}},
\bauthor{\bsnm{Li}, \binits{D.}},
\bauthor{\bsnm{Mei}, \binits{T.}}:
\bctitle{Msr asia msm at activitynet challenge 2017: Trimmed action
  recognition, temporal action proposals and densecaptioning events in videos}.
In: \bbtitle{CVPR Workshops}
(\byear{2017})
\end{bchapter}
\endbibitem

\bibitem{SSTAD_BMVC17}
\begin{bchapter}
\bauthor{\bsnm{Buch}, \binits{S.}},
\bauthor{\bsnm{Escorcia}, \binits{V.}},
\bauthor{\bsnm{Ghanem}, \binits{B.}},
\bauthor{\bsnm{Fei-Fei}, \binits{L.}},
\bauthor{\bsnm{Niebles}, \binits{J.C.}}:
\bctitle{End-to-end, single-stream temporal action detection in untrimmed
  videos}.
In: \bbtitle{BMVC}
(\byear{2017})
\end{bchapter}
\endbibitem

\bibitem{SRG}
\begin{botherref}
\oauthor{\bsnm{{Eun}}, \binits{H.}},
\oauthor{\bsnm{{Lee}}, \binits{S.}},
\oauthor{\bsnm{{Moon}}, \binits{J.}},
\oauthor{\bsnm{{Park}}, \binits{J.}},
\oauthor{\bsnm{{Jung}}, \binits{C.}},
\oauthor{\bsnm{{Kim}}, \binits{C.}}:
Srg: Snippet relatedness-based temporal action proposal generator.
IEEE Transactions on Circuits and Systems for Video Technology,
1--1
(2019)
\end{botherref}
\endbibitem

\bibitem{wang2021self}
\begin{bchapter}
\bauthor{\bsnm{Wang}, \binits{X.}},
\bauthor{\bsnm{Zhang}, \binits{S.}},
\bauthor{\bsnm{Qing}, \binits{Z.}},
\bauthor{\bsnm{Shao}, \binits{Y.}},
\bauthor{\bsnm{Gao}, \binits{C.}},
\bauthor{\bsnm{Sang}, \binits{N.}}:
\bctitle{Self-supervised learning for semi-supervised temporal action
  proposal}.
In: \bbtitle{CVPR},
pp. \bfpage{1905}--\blpage{1914}
(\byear{2021})
\end{bchapter}
\endbibitem

\bibitem{qing2021temporal}
\begin{bchapter}
\bauthor{\bsnm{Qing}, \binits{Z.}},
\bauthor{\bsnm{Su}, \binits{H.}},
\bauthor{\bsnm{Gan}, \binits{W.}},
\bauthor{\bsnm{Wang}, \binits{D.}},
\bauthor{\bsnm{Wu}, \binits{W.}},
\bauthor{\bsnm{Wang}, \binits{X.}},
\bauthor{\bsnm{Qiao}, \binits{Y.}},
\bauthor{\bsnm{Yan}, \binits{J.}},
\bauthor{\bsnm{Gao}, \binits{C.}},
\bauthor{\bsnm{Sang}, \binits{N.}}:
\bctitle{Temporal context aggregation network for temporal action proposal
  refinement}.
In: \bbtitle{CVPR},
pp. \bfpage{485}--\blpage{494}
(\byear{2021})
\end{bchapter}
\endbibitem

\bibitem{zheng2021boundary}
\begin{botherref}
\oauthor{\bsnm{Zheng}, \binits{J.}},
\oauthor{\bsnm{Chen}, \binits{D.}},
\oauthor{\bsnm{Hu}, \binits{H.}}:
Boundary adjusted network based on cosine similarity for temporal action
  proposal generation.
Neural Processing Letters,
1--16
(2021)
\end{botherref}
\endbibitem

\bibitem{cocodataset}
\begin{bchapter}
\bauthor{\bsnm{Lin}, \binits{T.-Y.}},
\bauthor{\bsnm{Maire}, \binits{M.}},
\bauthor{\bsnm{Belongie}, \binits{S.}},
\bauthor{\bsnm{Hays}, \binits{J.}},
\bauthor{\bsnm{Perona}, \binits{P.}},
\bauthor{\bsnm{Ramanan}, \binits{D.}},
\bauthor{\bsnm{Dollar}, \binits{P.}},
\bauthor{\bsnm{Zitnick}, \binits{L.}}:
\bctitle{Microsoft coco: Common objects in context}.
In: \bbtitle{ECCV}
(\byear{2014})
\end{bchapter}
\endbibitem

\bibitem{gao2020accurate}
\begin{bchapter}
\bauthor{\bsnm{Gao}, \binits{J.}},
\bauthor{\bsnm{Shi}, \binits{Z.}},
\bauthor{\bsnm{Wang}, \binits{G.}},
\bauthor{\bsnm{Li}, \binits{J.}},
\bauthor{\bsnm{Yuan}, \binits{Y.}},
\bauthor{\bsnm{Ge}, \binits{S.}},
\bauthor{\bsnm{Zhou}, \binits{X.}}:
\bctitle{Accurate temporal action proposal generation with relation-aware
  pyramid network}.
In: \bbtitle{AAAI},
vol. \bseriesno{34},
pp. \bfpage{10810}--\blpage{10817}
(\byear{2020})
\end{bchapter}
\endbibitem

\bibitem{pgcn_cvpr2020}
\begin{bchapter}
\bauthor{\bsnm{Zeng}, \binits{R.}},
\bauthor{\bsnm{Huang}, \binits{W.}},
\bauthor{\bsnm{Tan}, \binits{M.}},
\bauthor{\bsnm{Rong}, \binits{Y.}},
\bauthor{\bsnm{Zhao}, \binits{P.}},
\bauthor{\bsnm{Huang}, \binits{J.}},
\bauthor{\bsnm{Gan}, \binits{C.}}:
\bctitle{Graph convolutional networks for temporal action localization}.
In: \bbtitle{ICCV},
pp. \bfpage{7094}--\blpage{7103}
(\byear{2019})
\end{bchapter}
\endbibitem

\bibitem{action_protocol}
\begin{botherref}
\oauthor{\bsnm{Xiong}, \binits{Y.}},
\oauthor{\bsnm{Wang}, \binits{L.}},
\oauthor{\bsnm{Wang}, \binits{Z.}},
\oauthor{\bsnm{Zhang}, \binits{B.}},
\oauthor{\bsnm{Song}, \binits{H.}},
\oauthor{\bsnm{Li}, \binits{W.}},
\oauthor{\bsnm{Lin}, \binits{D.}},
\oauthor{\bsnm{Qiao}, \binits{Y.}},
\oauthor{\bsnm{Gool}, \binits{L.V.}},
\oauthor{\bsnm{Tang}, \binits{X.}}:
{CUHK} {\&} {ETHZ} {\&} {SIAT} submission to activitynet challenge 2016.
CoRR
\textbf{abs/1608.00797}
(2016)
\end{botherref}
\endbibitem

\bibitem{untrimmetNet}
\begin{bchapter}
\bauthor{\bsnm{Wang}, \binits{L.}},
\bauthor{\bsnm{Xiong}, \binits{Y.}},
\bauthor{\bsnm{Lin}, \binits{D.}},
\bauthor{\bsnm{Van~Gool}, \binits{L.}}:
\bctitle{Untrimmednets for weakly supervised action recognition and detection}.
In: \bbtitle{CVPR}
(\byear{2017})
\end{bchapter}
\endbibitem

\bibitem{zhao2020bottom}
\begin{bchapter}
\bauthor{\bsnm{Zhao}, \binits{P.}},
\bauthor{\bsnm{Xie}, \binits{L.}},
\bauthor{\bsnm{Ju}, \binits{C.}},
\bauthor{\bsnm{Zhang}, \binits{Y.}},
\bauthor{\bsnm{Wang}, \binits{Y.}},
\bauthor{\bsnm{Tian}, \binits{Q.}}:
\bctitle{Bottom-up temporal action localization with mutual regularization}.
In: \bbtitle{ECCV},
pp. \bfpage{539}--\blpage{555}
(\byear{2020}).
\bcomment{Springer}
\end{bchapter}
\endbibitem

\end{thebibliography}
\end{document}